\newtheorem{theorem}{Theorem}[section]
\newtheorem{lemma}[theorem]{Lemma}
\newtheorem{definition}{Definition}[section]
\newtheorem{proposition}[theorem]{Proposition}
\newtheorem{observation}[theorem]{Observation}
\newtheorem*{rep@theorem}{\rep@title}
\newcommand{\newreptheorem}[2]{
\newenvironment{rep#1}[1]{
 \def\rep@title{#2 \ref{##1}}
 \begin{rep@theorem}}
 {\end{rep@theorem}}}
\newcommand{\defcal}[1]{\expandafter\newcommand\csname c#1\endcsname{{\mathcal{#1}}}}
\newcommand{\defbb}[1]{\expandafter\newcommand\csname b#1\endcsname{{\mathbb{#1}}}}
\newcounter{calBbCounter}
    \edef\letter{\Alph{calBbCounter}}
\newcommand{\eps}{\varepsilon}
\newcommand{\ie}{{\it i.e.}}
\newcommand{\nnR}{{\bR_{\geq 0}}}
\newcommand{\bigO}{\mathcal{O}}
\newcommand{\ALG}{\textsc{ALG}}
\newcommand{\algOurs}{\textsc{Streak}}
\newcommand{\algRand}{\textsc{RandomSubset}}
\newcommand{\algLocal}{\textsc{LocalSearch}}
\newcommand{\algSieve}{\textsc{Sieve-Streaming}}
\newcommand{\algStochastic}{\textsc{Stochastic-Greedy}}
\newcommand{\appendixName}{\text{Appendix}}
\newlength{\fwidth}
\title{Streaming Weak Submodularity: \\Interpreting Neural Networks on the Fly}
\author[1]{Ethan R. Elenberg} 
\author[1]{Alexandros G. Dimakis}
\author[2]{Moran Feldman}
\author[3]{Amin Karbasi}
\affil[1]{Department of Electrical and Computer Engineering \authorcr The University of Texas at Austin \authorcr \texttt{elenberg@utexas.edu}, \texttt{dimakis@austin.utexas.edu}}
\affil[2]{Department of Mathematics and Computer Science \authorcr Open University of Israel \authorcr \texttt{moranfe@openu.ac.il}}
\affil[3]{Department of Electrical Engineering, Department of Computer Science \authorcr Yale University \authorcr \texttt{amin.karbasi@yale.edu}}
\begin{document}
\maketitle

\begin{abstract} 
In many machine learning applications, it is important to explain the predictions of a black-box classifier. For example, why does a deep neural network assign an image to a particular class? We cast interpretability of black-box classifiers as a combinatorial maximization problem and propose an efficient streaming algorithm to solve it subject to cardinality constraints. By extending ideas from \citet{Badan2014sieve}, we provide a constant factor approximation guarantee for our algorithm in the case of random stream order and a \textit{weakly submodular} objective function. This is the first such theoretical guarantee for this general class of functions, and we also show that no such algorithm exists for a worst case stream order. Our algorithm obtains similar explanations of Inception V3 predictions $10$ times faster than the state-of-the-art LIME framework of \citet{lime}.


\end{abstract} 

\section{Introduction}
Consider the following combinatorial optimization problem. Given a ground set $\cN$ of $N$ elements and a set function $f\colon 2^{\cN} \mapsto \nnR$, find the set $S$ of size $k$ which maximizes $f(S)$. This formulation is at the heart of many machine learning applications such as sparse regression, data summarization, facility location, and graphical model inference. Although the problem is intractable in general, if $f$ is assumed to be \textit{submodular} then many approximation algorithms have been shown to perform provably within a constant factor from the best solution.

Some disadvantages of the standard greedy algorithm of \citet{Nemhauser1978} for this problem are that it requires repeated access to each data element and a large total number of function evaluations. This is undesirable in many large-scale machine learning tasks where the entire dataset cannot fit in main memory, or when a single function evaluation is time consuming. In our main application, each function evaluation corresponds to inference on a large neural network and can take a few seconds. In contrast, streaming algorithms make a small number of passes (often only one) over the data and have sublinear space complexity, and thus, are ideal for tasks of the above kind. 

Recent ideas, algorithms, and techniques from submodular set function theory have been used to derive similar results in much more general settings. For example, \citet{Elenberg:2016full} used the concept of \textit{weak submodularity} to derive approximation and parameter recovery guarantees for nonlinear sparse regression. Thus, a natural question is whether recent results on streaming algorithms for maximizing submodular functions \citep{Badan2014sieve,Buchbinder2015online,Chekuri2015} extend to the weakly submodular setting. 

This paper answers the above question by providing the first analysis of a streaming algorithm for any class of approximately submodular functions. We use key algorithmic components of $\algSieve$ \citep{Badan2014sieve}, namely greedy thresholding and binary search, combined with a novel analysis to prove a constant factor approximation for $\gamma$-weakly submodular functions (defined in Section \ref{sec:prelim}). Specifically, our contributions are as follows.
\begin{itemize}
	\item An impossibility result showing that, even for $0.5$-weakly submodular objectives, no randomized streaming algorithm which uses $o(N)$ memory can have a constant approximation ratio when the ground set elements arrive in a worst case order.
	\item $\algOurs$: a greedy, deterministic streaming algorithm for maximizing $\gamma$-weakly submodular functions which uses $\bigO(\eps^{-1}k \log k)$ memory and has an approximation ratio of 
	$(1 - \eps) \frac{\gamma}{2} \cdot (3 - e^{-\gamma/2} - 2\sqrt{2 - e^{-\gamma/2}})$
	when the ground set elements arrive in a random order.
	\item An experimental evaluation of our algorithm in two applications: nonlinear sparse regression using pairwise products of features and interpretability of black-box neural network classifiers.
\end{itemize}
The above theoretical impossibility result is quite surprising since it stands in sharp contrast to known streaming algorithms for submodular objectives achieving a constant approximation ratio even for worst case stream order.

One advantage of our approach is that, while our approximation guarantees are in terms of $\gamma$, our algorithm $\algOurs$ runs without requiring prior knowledge about the value of $\gamma$. This is important since the weak submodularity parameter $\gamma$ is hard to compute, especially in streaming applications, as a single element can alter $\gamma$ drastically. 

We use our streaming algorithm for neural network interpretability on Inception V3 \citep{inceptionV3}. For that purpose, we define a new set function maximization problem similar to LIME~\citep{lime} and apply our framework to approximately maximize this function. Experimentally, we find that our interpretability method produces explanations of similar quality as LIME, but runs approximately $10$ times faster.

\section{Related Work}
Monotone submodular set function maximization has been well studied, starting with the classical analysis of greedy forward selection subject to a matroid constraint \citep{Nemhauser1978,Fisher1978}. For the special case of a uniform matroid constraint, the greedy algorithm achieves an approximation ratio of $1 - \nicefrac{1}{e}$~\citep{Fisher1978},
and a more involved algorithm obtains this ratio also for general matroid constraints~\citep{Calinescu2011}. 
In general, no polynomial-time algorithm can have a better approximation ratio even for a uniform matroid constraint~\citep{Nemhauser1978b,Feige1998}. However, it is possible to improve upon this bound when the data obeys some additional guarantees \citep{Conforti1984,Vondrak:2010wj,Sviridenko2015}. 
For maximizing nonnegative, not necessarily monotone, submodular functions subject to a general matroid constraint, the state-of-the-art randomized algorithm achieves an approximation ratio of $0.385$~\citep{Buchbinder2016b}. Moreover, for uniform matroids there is also a deterministic algorithm achieving a slightly worse approximation ratio of $\nicefrac{1}{e}$~\citep{Buchbinder2016}. The reader is referred to 
\citet{Bach_Monograph} and \citet{Krause2014} 
for surveys on submodular function theory. 

A recent line of work aims to develop new algorithms for optimizing submodular functions suitable for large-scale machine learning applications. Algorithmic advances of this kind include $\algStochastic$ \citep{MirzasoleimanLazy}, $\algSieve$ \citep{Badan2014sieve}, and several distributed approaches \citep{Mirzasoleiman:2013vi,Barbosa2015,Barbosa2016,PanParallelDoubleGreedy,Khanna2017}. Our algorithm extends ideas found in $\algSieve$ and uses a different analysis to handle more general functions. Additionally, submodular set functions have been used to prove guarantees for online and active learning problems \citep{Hoi2006active,Wei2015,Buchbinder2015online}. Specifically, in the online setting corresponding to our setting (\ie, maximizing a monotone function subject to a cardinality constraint), \citet{Chan2017} achieve a competitive ratio of about $0.3178$ when the function is submodular.

The concept of weak submodularity was introduced in \citet{Krause2010,Das2011}, where it was applied to the specific problem of feature selection in linear regression. Their main results state that if the data covariance matrix is not too correlated (using either incoherence or restricted eigenvalue assumptions), then maximizing the goodness of fit $f(S) = R^2_S$ as a function of the feature set $S$ is weakly submodular. This leads to constant factor approximation guarantees for several greedy algorithms. Weak submodularity was connected with Restricted Strong Convexity in \citet{Elenberg:2016full,Elenberg:2016workshop}. This showed that the same assumptions which imply the success of regularization also lead to guarantees on greedy algorithms. This framework was later used for additional algorithms and applications \citep{Khanna2017matrix,Khanna2017}. Other approximate versions of submodularity were used for greedy selection problems in \citet{SingerApproximate2016,SingerNoise2016,Altschuler2016,Bian2017}. To the best of our knowledge, this is the first analysis of streaming algorithms for approximately submodular set functions.

Increased interest in interpretable machine learning models has led to extensive study of sparse feature selection methods. For example, \citet{Bahmani2013} consider greedy algorithms for logistic regression, and \citet{Yang2016} solve a more general problem using $\ell_1$ regularization. Recently, \citet{lime} developed a framework called LIME for interpreting black-box neural networks, and \cite{integrad} proposed a method that requires access to the network's gradients with respect to its inputs. 
We compare our algorithm to variations of LIME in Section \ref{ssc:lime}.

\section{Preliminaries}\label{sec:prelim}
First we establish some definitions and notation. Sets are denoted with capital letters, and all big O notation is assumed to be scaling with respect to $N$ (the number of elements in the input stream). Given a set function $f$, we often use the discrete derivative $f(B \mid A) \triangleq f(A \cup B) - f(A)$.
$f$ is monotone if $f(B \mid A) \geq 0, \forall A,B$ and nonnegative if $f(A) \geq 0, \forall A$.
Using this notation one can define weakly submodular functions based on the following ratio.

\begin{definition}[Weak Submodularity, adapted from \citet{Das2011}]
A monotone nonnegative set function $f: 2^{\cN} \mapsto \nnR$ is called $\gamma$-weakly submodular for an integer $r$ if 
\begin{align*}
\gamma \leq \gamma_{r} \triangleq \min_{\substack{L,S  \subseteq \cN : \\  |L|, |S \setminus L| \leq r}} 
\mspace{-18mu} \frac{\sum_{j\in S \setminus L} f(j \mid L)}{f(S \mid L)}
\enspace,
\end{align*}
where the ratio is considered to be equal to $1$ when its numerator and denominator are both $0$.
\end{definition}

This generalizes submodular functions by relaxing the \textit{diminishing returns} property of discrete derivatives. It is easy to show that $f$ is submodular if and only if 
$\gamma_{|\cN|} = 1$.

\begin{definition}[Approximation Ratio]
A streaming maximization algorithm $\ALG$ which returns a set $S$ has approximation ratio $R \in [0,1]$ if $\bE[f(S)] \geq R \cdot f(OPT)$, where $OPT$ is the optimal solution and the expectation is over the random decisions of the algorithm and the randomness of the input stream order (when it is random).
\end{definition}
Formally our problem is as follows. Assume that elements from a ground set $\cN$ arrive in a stream at either random or worst case order. The goal is then to design a one pass streaming algorithm that given oracle access to a nonnegative set function $f: 2^{\cN} \mapsto \nnR$ maintains at most $o(N)$ elements in memory and returns a set $S$ of size at most $k$ approximating
\begin{align*}
\max_{|T|\leq k}f(T) \label{eq:sparseOpt} \enspace,
\end{align*}
up to an approximation ratio $R(\gamma_{k})$. Ideally, this approximation ratio should be as large as possible, and we also want it to be a function of $\gamma_{k}$ and nothing else. In particular, we want it to be independent of $k$ and $N$.

To simplify notation, we use $\gamma$ in place of $\gamma_{k}$ in the rest of the paper. 
Additionally, \textbf{proofs for all our theoretical results are deferred to the $\appendixName$.}

\section{Impossibility Result}
To prove our negative result showing that no streaming algorithm for our problem has a constant approximation ratio against a worst case stream order, we first need to construct a weakly submodular set function $f_k$. Later we use it to construct a bad instance for any given streaming algorithm.

Fix some $k \geq 1$, and consider the ground set $\cN_k = \{u_i, v_i\}_{i = 1}^k$. For ease of notation, let us define for every subset $S \subseteq \cN_k$
\[
u(S)
=
|S \cap \{u_i\}_{i = 1}^k|
\enspace
,
\quad
v(S)
=
|S \cap \{v_i\}_{i = 1}^k|
\enspace.
\]
Now we define the following set function:
\[
f_k(S)
=
\min\{2 \cdot u(S) + 1, 2 \cdot v(S)\}
\quad
\forall\; S \subseteq \cN_k
\enspace.
\]

\begin{lemma} \label{lem:properties}
$f_k$ is nonnegative, monotone and $0.5$-weakly submodular for the integer $|\cN_k|$.
\end{lemma}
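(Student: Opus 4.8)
My plan is to verify the three properties of $f_k$ in turn, with the bulk of the effort going into the weak submodularity bound. Nonnegativity is immediate: for any $S$, we have $u(S), v(S) \geq 0$, so $2u(S)+1 \geq 1 > 0$ and $2v(S) \geq 0$, hence $f_k(S) = \min\{2u(S)+1, 2v(S)\} \geq 0$. For monotonicity, I would observe that adding an element to $S$ can only increase $u(S)$ or $v(S)$ (by exactly one) and leave the other unchanged; since $t \mapsto \min\{2a+1, 2b\}$ is nondecreasing in each of $a$ and $b$ separately, $f_k$ is monotone.

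The main work is showing $\gamma_{|\cN_k|} \geq 0.5$, i.e., that for every $L, S \subseteq \cN_k$ we have $\sum_{j \in S \setminus L} f_k(j \mid L) \geq \tfrac12 f_k(S \mid L)$ (and handling the $0/0$ convention). First I would bound the denominator from above: $f_k(S \mid L) = f_k(L \cup S) - f_k(L) \leq f_k(L \cup S) \leq 2 v(L \cup S) \leq 2(v(L) + v(S \setminus L))$, and also $f_k(L\cup S) \le 2u(L\cup S)+1$. More usefully, since $f_k$ takes the min of two affine functions, the increment $f_k(S\mid L)$ is at most the increment of whichever affine function, i.e. at most $2\,(u(S\setminus L) + v(S\setminus L))$ once we account for both coordinates — actually the cleanest bound is $f_k(S \mid L) \le 2\bigl(u((L\cup S)) - u(L)\bigr) + 2\bigl(v(L\cup S) - v(L)\bigr) = 2\bigl(u(S\setminus L) + v(S\setminus L)\bigr)$, valid because $f_k \le 2u+1$ gives one piece and $f_k \le 2v$ the other, combined appropriately. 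Then for the numerator, I would lower bound each term $f_k(j \mid L)$: adding a single $u$-element increases $2u+1$ by $2$ and leaves $2v$ fixed, so the min increases by at least $\min\{2, 0\}=0$ in the worst case but is $0$ only when the $2v$ branch is already active and strictly smaller; adding a single $v$-element increases $2v$ by $2$ and the min increases by $0$ or $2$. The key structural point is that $\sum_{j\in S\setminus L} f_k(j\mid L)$ telescopes-by-parts: summing the marginal gains of the $u$-elements alone gives at least the total rise attributable to the $2u+1$ coordinate, and similarly for $v$, so the sum of individual marginals is at least the joint increment $f_k(S\mid L)$ minus a correction of at most the increment along one coordinate — which is where the factor $\tfrac12$ enters.

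The delicate case, and what I expect to be the main obstacle, is pinning down exactly when marginals $f_k(j \mid L)$ vanish. Because $f_k$ is a min of two functions of different "slopes offsets" ($+1$ versus $+0$), there are regimes where $2u(L)+1 < 2v(L)$ (so $f_k(L)$ is governed by the $u$-coordinate) and adding $v$-elements contributes nothing until enough $u$-elements are added, and vice versa. I would split into the cases according to whether $2u(L \cup S) + 1 \lessgtr 2v(L\cup S)$ and track how the active branch switches as elements are added; in each case one shows the sum of singleton marginals dominates half the joint gain, with the extremal ratio $1/2$ achieved roughly when $u(S\setminus L) = v(S\setminus L)$ and exactly one coordinate's worth of increase is "wasted" on crossing the gap between the two branches. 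The $0/0$ convention is invoked precisely when $S \setminus L$ adds nothing that changes the active branch, so $f_k(S\mid L)=0$ and every term $f_k(j\mid L)=0$; there the ratio is declared $1 \geq \tfrac12$ and there is nothing to check. Finally I would confirm the bound is tight (so the constant is exactly $0.5$, matching the lemma statement) by exhibiting $L = \emptyset$, $S = \{u_1, v_1\}$: then $f_k(S) = \min\{3, 2\} = 2$, $f_k(u_1 \mid \emptyset) = \min\{3,0\} - \min\{1,0\} = 0$, $f_k(v_1 \mid \emptyset) = \min\{1,2\} - \min\{1,0\} = 1$, giving ratio $1/2$.
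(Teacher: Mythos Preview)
Your treatment of nonnegativity and monotonicity is fine, and your tight example at the end is correct. The weak submodularity argument, however, does not go through as written. Two concrete problems:

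First, your upper bound $f_k(S\mid L)\le 2\bigl(u(S\setminus L)+v(S\setminus L)\bigr)=2|S\setminus L|$ is too weak to yield $1/2$, because you yourself observe that roughly half the singleton marginals can vanish (all $u$-elements contribute $0$ when the $v$-branch is active at $L$, and vice versa). With that numerator loss, a denominator bound of $2|S\setminus L|$ only gives $1/4$.

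Second, the ``telescopes-by-parts'' claim is not correct. The singleton marginals $f_k(j\mid L)$ depend only on $L$, not on any order in which elements of $S\setminus L$ are added, so there is nothing to telescope; and the assertion that ``summing the marginal gains of the $u$-elements alone gives at least the total rise attributable to the $2u+1$ coordinate'' is false in general (if $f_k(L)=2v(L)$, every $u$-marginal is $0$ while $2u(\cdot)+1$ still rises by $2u(S\setminus L)$). Splitting on the active branch at $L\cup S$ and then ``tracking branch switches'' does not repair this.

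The fix, and what the paper does, is to case on the branch active at $L$, not at $L\cup S$. If $f_k(L)=2u(L)+1$, then from $f_k\le 2u+1$ you get the sharper denominator bound $f_k(S\mid L)\le 2u(S\setminus L)$, and a short parity check shows every $u_j\notin L$ has $f_k(u_j\mid L)\ge 1$; summing over only the $u$-elements of $S\setminus L$ already gives $\sum_j f_k(j\mid L)\ge u(S\setminus L)\ge \tfrac12 f_k(S\mid L)$. The case $f_k(L)=2v(L)$ is symmetric with $v$ in place of $u$. No tracking of branch switches is needed.
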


Since $|\cN_k| = 2k$, the maximum value of $f_k$ is $f_k(\cN_k) = 2 \cdot v(\cN_k) = 2k$. We now extend the ground set of $f_k$ by adding to it an arbitrary large number $d$ of dummy elements which do not affect $f_k$ at all. Clearly, this does not affect the properties of $f_k$ proved in Lemma~\ref{lem:properties}. However, the introduction of dummy elements allows us to assume that $k$ is an arbitrary small value compared to $N$, which is necessary for the proof of the next theorem. In a nutshell, this proof is based on the observation that the elements of $\{u_i\}_{i = 1}^k$ are indistinguishable from the dummy elements as long as no element of $\{v_i\}_{i = 1}^k$ has arrived yet.

\begin{theorem}\label{thm:impossibility}
For every constant $c \in (0, 1]$ there is a large enough $k$ such that no randomized streaming algorithm that uses $o(N)$ memory to solve $\max_{|S| \leq 2k} f_k(S)$ has an approximation ratio of $c$ for a worst case stream order.
\end{theorem}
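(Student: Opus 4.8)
The plan is to exploit the indistinguishability between the $u_i$'s and the dummy elements, combined with an averaging/probabilistic argument over which $k$-subset of a large ``block'' of elements actually carries the $u_i$'s. Fix the target constant $c$, and choose $k$ large enough that $\frac{k+1}{2k} < c$ (this is the quantitative slack we will need at the end); more carefully, one wants $\frac{\lceil k/2\rceil \text{-ish bound}}{2k}$ style ratio to fall below $c$, so taking $k$ with $\frac{1}{2} + \frac{1}{2k} < c$ when $c > 1/2$, and a more refined bound when $c \le 1/2$ — I will return to the exact threshold below. First I would set up the adversarial stream: present a long prefix consisting of $d$ dummy elements together with the $k$ elements $u_1,\dots,u_k$, all interleaved in an order that the algorithm cannot distinguish, and only afterwards present $v_1,\dots,v_k$. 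Since $f_k$ ignores dummies entirely and, in the prefix, the value of any set $S$ depends only on $u(S)$ (because $v(S)=0$ forces $f_k(S)=\min\{2u(S)+1,0\}=0$ — wait, that is $0$), the key point is even stronger: \emph{during the entire prefix every subset has $f_k$-value $0$}, so the algorithm receives literally no information from oracle queries about which prefix elements are the $u_i$'s.

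The core step is then a counting argument. The algorithm uses $o(N)$ memory, so by the time the prefix ends it retains a set $M$ of at most $o(N)$ elements. Embed the $k$ true $u_i$-elements uniformly at random among a pool of $N' = \Theta(N)$ prefix elements (dummies plus $u$'s); since the algorithm's behavior on the prefix is independent of this embedding (it sees value $0$ throughout and the elements are presented in a fixed label-order), the expected number of true $u_i$'s retained in $M$ is at most $|M| \cdot k / N' = o(1) \cdot k$, hence can be made an arbitrarily small fraction of $k$, say less than $\eps k$, except with small probability. I would make this rigorous with Markov's inequality on the random variable $|M \cap \{u_i\}|$. After the $v_i$'s arrive, the algorithm can add at most $k$ of them (cardinality constraint $2k$ total, already holding $\le |M|$ elements but only up to $k$ of which can be ``useful''; actually it may hold many, but the output has size $\le 2k$, so it outputs at most $k$ $u$-type and at most $k$ $v$-type elements, with the $u$-type ones drawn only from $M$). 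Thus the output $S$ satisfies $u(S) \le |M \cap \{u_i\}| < \eps k$ and $v(S) \le k$, giving $f_k(S) = \min\{2u(S)+1, 2v(S)\} \le 2\eps k + 1$, whereas $f_k(OPT) = f_k(\cN_k) = 2k$ (or more to the point $OPT$ could take all $u_i$ and enough $v_i$ to get value close to $2k$). Hence $\bE[f(S)] / f(OPT) \le (2\eps k + 1 + 2k \cdot \Pr[\text{bad event}])/(2k)$, which is below $c$ for $\eps$ small and $k$ large.

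The main obstacle I anticipate is making the ``no information'' claim airtight against an \emph{adaptive randomized} algorithm: one must argue that the algorithm's internal state after the prefix — including all its coin flips — is, as a random variable, independent of (or at least has bounded mutual information with) the random placement of the $u_i$'s, so that the expected retained count bound $|M|k/N'$ genuinely holds. This follows because the transcript of oracle answers during the prefix is a deterministic function of the query set that does not depend on the embedding (all answers are $0$), so the algorithm's view factorizes; but stating this cleanly requires fixing the label-ordering of the prefix in advance and letting the adversary — not the algorithm — choose the bijection between positions and $\{$dummies$\}\cup\{u_i\}$, then averaging. A secondary technical point is handling the cardinality bookkeeping: one should note the output has $\le 2k$ elements total but $f_k$ only rewards $\min(2u(S)+1, 2v(S))$, so even unlimited $v$'s cannot compensate for missing $u$'s, and symmetrically; I would spell out that the binding constraint is the scarcity of retained $u_i$'s. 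Once these are in place, the final inequality is just arithmetic: pick $\eps < c/3$ and then $k$ large enough that $o(N)\cdot k/N' < \eps k$ holds with failure probability $< \eps$, yielding ratio $< 2\eps + 1/(2k) + \eps < c$.
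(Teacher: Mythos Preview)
Your approach is essentially the same as the paper's: present $\{u_i\}$ and the dummies first (so every oracle query returns $0$ and the algorithm cannot distinguish them), use an averaging argument over the placement of the $u_i$'s to conclude that only an $o(1)$ fraction of them survive in memory, and then bound $f_k(S)\le 2u(S)+1$ against $f_k(OPT)=2k$. The paper phrases the averaging step as ``the retention probability of each prefix element depends only on its arrival position, so some placement of the $u_i$'s achieves at most the average $kM/|W|\le (c/2)k$,'' which is exactly your random-embedding argument derandomized.

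One minor cleanup: your Markov-inequality detour and the ``bad event'' bookkeeping are unnecessary. Since $f_k(S)\le 2u(S)+1$ deterministically, you can bound $\bE[f_k(S)]\le 2\bE[u(S)]+1$ directly over the joint randomness of the embedding and the algorithm, and then fix the embedding by averaging. This avoids having to track a failure probability and yields the paper's clean final inequality $(ck+1)/(2k)<c$ for $k>1/c$.
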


We note that $f_k$ has strong properties. In particular, Lemma~\ref{lem:properties} implies that it is $0.5$-weakly submodular for every $0 \leq r \leq |\cN|$. In contrast, the algorithm we show later assumes weak submodularity only for the cardinality constraint $k$. Thus, the above theorem implies that worst case stream order precludes a constant approximation ratio even for functions with much stronger properties compared to what is necessary for getting a constant approximation ratio when the order is random.

The proof of Theorem \ref{thm:impossibility} relies critically on the fact that each element is seen exactly once. In other words, once the algorithm decides to discard an element from its memory, this element is gone forever, which is a standard assumption for streaming algorithms. Thus, the theorem does not apply to algorithms that use multiple passes over $\cN$, or non-streaming algorithms that use $o(N)$ writable memory, and their analysis remains an interesting open problem.

\section{Streaming Algorithms}\label{sec:algs}
In this section we give a deterministic streaming algorithm for our problem which works in a model in which the stream contains the elements of $\cN$ in a random order. We first describe in Section~\ref{ssc:alg_with_Tau} such a streaming algorithm assuming access to a value $\tau$ which approximates $a\gamma \cdot f(OPT)$, where $a$ is a shorthand for $a~=~(\sqrt{2 - e^{-\gamma/2}} - 1)/2$. Then, in Section~\ref{ssc:alg_no_Tau} we explain how this assumption can be removed to obtain $\algOurs$ and bound its approximation ratio, space complexity, and running time.

\subsection{Algorithm with access to \texorpdfstring{$\tau$}{tau}} \label{ssc:alg_with_Tau}

Consider Algorithm~\ref{alg:streaming_with_tau}. In addition to the input instance, this algorithm gets a parameter $\tau \in [0, a\gamma \cdot f(OPT)]$. 
One should think of $\tau$ as close to $a\gamma \cdot f(OPT)$, although the following analysis of the algorithm does not rely on it. We provide an outline of the proof, but defer the technical details to the $\appendixName$.

\begin{algorithm}[tb]
   \caption{\enspace \textsc{Threshold Greedy($f, k, \tau$)}} \label{alg:streaming_with_tau}
\begin{algorithmic}
   \STATE Let $S \gets \varnothing$.
	\WHILE{there are more elements}
   \STATE Let $u$ be the next element.
   \IF{$|S| < k$ and $f(u \mid S) \geq \tau / k$} 
   \STATE Update $S \gets S \cup \{u\}$.
   \ENDIF
   \ENDWHILE
	\STATE {\bfseries return:} $S$
\end{algorithmic}
\end{algorithm}

\begin{theorem} \label{thm:aided_result}
The expected value of the set produced by Algorithm~\ref{alg:streaming_with_tau} is at least
\[
	\frac{\tau}{a} \cdot \frac{3 - e^{-\gamma/2} - 2\sqrt{2 - e^{-\gamma/2}}}{2} = \tau \cdot (\sqrt{2 - e^{-\gamma/2}} - 1)
	\enspace.
\]
\end{theorem}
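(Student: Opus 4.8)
The plan is to control the output $S$ of Algorithm~\ref{alg:streaming_with_tau} by a dichotomy on whether $S$ reaches size $k$. If $|S|=k$, list $S=\{s_1,\dots,s_k\}$ in order of insertion; each $s_i$ was added only after passing the test $f(s_i\mid \{s_1,\dots,s_{i-1}\})\ge\tau/k$, so by monotonicity and telescoping $f(S)\ge\sum_{i=1}^k f(s_i\mid\{s_1,\dots,s_{i-1}\})\ge k\cdot(\tau/k)=\tau$, and since $\gamma\le 1$ gives $\sqrt{2-e^{-\gamma/2}}<2$ this already dominates $\tau(\sqrt{2-e^{-\gamma/2}}-1)$. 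So the content of the theorem lies in the complementary case $|S|<k$: here the algorithm passes over the whole stream without exhausting its budget, which means each $o\in OPT\setminus S$ was rejected because, at the step it arrived, its marginal against the partial solution $S^{(o)}\subseteq S$ present at that step satisfied $f(o\mid S^{(o)})<\tau/k$; consequently $\sum_{o\in OPT\setminus S}f(o\mid S^{(o)})<|OPT\setminus S|\cdot\tau/k\le\tau$.

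To convert this into a lower bound on $f(S)$ I would use weak submodularity with $L=S$ (legitimate since $|S|,|OPT\setminus S|\le k$) together with monotonicity, giving
\[
\gamma\bigl(f(OPT)-f(S)\bigr)\;\le\;\gamma\,f(OPT\mid S)\;\le\;\sum_{o\in OPT\setminus S}f(o\mid S).
\]
The gap between what this needs — the marginals $f(o\mid S)$ against the \emph{final} set — and what the rejections give — the marginals $f(o\mid S^{(o)})$ against the partial sets on arrival — is the whole difficulty: a $\gamma$-weakly submodular function has no diminishing-returns property, so an element rejected early can have a far larger marginal once $S$ has grown, and under an adversarial order this discrepancy can be blown up arbitrarily, exactly as forced by Theorem~\ref{thm:impossibility}. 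The point of the random arrival order is that it keeps this discrepancy small in expectation, and I expect establishing this to be the crux of the proof. Concretely, I would analyze the expected trajectory of $f(\cdot)$ of the algorithm's set as the stream is read: under a uniformly random order, the next accepted element is a uniformly random one — among those not yet seen — whose marginal against the current set clears $\tau/k$, and, while the current value is still well below $f(OPT)$, weak submodularity certifies that such elements exist among the unseen OPT-elements and collectively carry a $\gamma$-fraction of the residual value, up to two correction terms (one of size $\tau$ from sub-threshold OPT-elements, and the delicate one — controllable only because the order is random, so that roughly a proportional fraction of OPT has already gone by — from OPT-elements that became attractive only after being discarded). This should produce a recursion of contraction type $1-\Theta(\gamma/k)$ for the expected residual value $f(OPT)-\bE[f(S_t)]$.

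Solving that recursion is what I expect to produce the factor $e^{-\gamma/2}$, and balancing the resulting bound against the deterministic guarantee $f(S)\ge\tau$ from the ``budget exhausted'' case, then optimizing the one free parameter hidden in how aggressively progress is charged, is what should pin down the constant $a=(\sqrt{2-e^{-\gamma/2}}-1)/2$ as exactly the largest value for which the hypothesis $\tau\le a\gamma f(OPT)$ still yields the clean conclusion. Finally I would verify the algebraic identity that reconciles the two stated forms of the bound, namely $3-e^{-\gamma/2}-2\sqrt{2-e^{-\gamma/2}}=\bigl(\sqrt{2-e^{-\gamma/2}}-1\bigr)^2$, so that $\frac{\tau}{a}\cdot\frac{1}{2}\bigl(\sqrt{2-e^{-\gamma/2}}-1\bigr)^2=\tau\bigl(\sqrt{2-e^{-\gamma/2}}-1\bigr)$.
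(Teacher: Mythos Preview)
Your high-level outline is sound where it is concrete: the telescoping argument giving $f(S)\ge\tau$ when $|S|=k$ is exactly the paper's Observation~\ref{obs:S_not_full}, the algebraic identity $3-e^{-\gamma/2}-2\sqrt{2-e^{-\gamma/2}}=(\sqrt{2-e^{-\gamma/2}}-1)^2$ is correct, and you correctly locate the whole difficulty in the lack of diminishing returns. But the plan for the hard case has a genuine gap.

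The discrepancy framing --- bounding the gap between $\sum_o f(o\mid S)$ and $\sum_o f(o\mid S^{(o)})$ so that weak submodularity applied with $L$ equal to the \emph{final} set $S$ becomes usable --- is a dead end. For a merely $\gamma$-weakly submodular $f$ there is no control whatsoever on $f(o\mid S)-f(o\mid S^{(o)})$, and the random order does not help because the partial sets $S^{(o)}$ are themselves determined by the order. The paper never attempts this comparison. Instead it applies weak submodularity only at the \emph{arrival time} of each $OPT$ element: if $o_i$ is the $i$-th element of $OPT$ to arrive and $S_i$ is the algorithm's set just before it, then conditioned on everything up to that moment $o_i$ is still uniform over the $k-i+1$ unseen elements $OPT_{k-i+1}\subseteq OPT$, so
\[
\bE[f(o_i\mid S_i)]\;\ge\;\frac{\gamma}{k}\bigl[f(OPT_{k-i+1})-f(S_i)\bigr]\enspace.
\]
Note the right-hand side involves $f(OPT_{k-i+1})$, not $f(OPT)$; this is what lets the argument avoid ever touching your ``delicate correction term'' for already-discarded $OPT$ elements. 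The random order then enters a second, independent time, and this is the step you are missing entirely: $OPT_{k-i+1}$ is itself a uniformly random $(k-i+1)$-subset of $OPT$, and a separate induction (the paper's Lemma~\ref{lem:sub_opt_value}) gives $\bE[f(OPT_j)]\ge[1-(1-\gamma/k)^j]\,f(OPT)$. Summing the resulting increment bound only over the first $\lceil k/2\rceil$ arrivals of $OPT$ --- where $j\ge k/2$ so that $(1-\gamma/k)^j\le e^{-\gamma/2}$ --- is precisely what produces the factor $e^{-\gamma/2}$; your proposed contraction recursion for $f(OPT)-\bE[f(S_t)]$ does not yield this, because weak submodularity against $S_t$ controls only the marginals of \emph{unseen} $OPT$ elements, whose total value is $f(OPT_{k-i+1})$ rather than $f(OPT)$.

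The balancing step is also structurally different from your sketch. The paper does not split into the two cases $|S|=k$ and $|S|<k$ and then merge; it defines the event $\cE=\{f(S)<\tau\}$ and derives two \emph{unconditional} lower bounds on $\bE[f(S)]$, one decreasing in $\Pr[\cE]$ (the trivial $(1-\Pr[\cE])\tau$) and one increasing in $\Pr[\cE]$ (namely $\tfrac{1}{2}\bigl(\gamma[\Pr[\cE]-e^{-\gamma/2}]f(OPT)-2\tau\bigr)$, obtained from the increment argument above together with $\cE\Rightarrow f(S_i)<\tau$ for all $i$). These two bounds cross at $\Pr[\cE]=2-\sqrt{2-e^{-\gamma/2}}$, and substituting the hypothesis $\tau\le a\gamma\, f(OPT)$ with $a=(\sqrt{2-e^{-\gamma/2}}-1)/2$ into whichever bound is active gives the stated value.
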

\begin{proof}[Proof (Sketch)]
Let $\cE$ be the event that $f(S) < \tau$, where $S$ is the output produced by Algorithm~\ref{alg:streaming_with_tau}. Clearly $f(S) \geq \tau$ whenever $\cE$ does not occur, and thus, it is possible to lower bound the expected value of $f(S)$ using $\cE$ as follows.
\begin{observation} \label{obs:simple_bound}
Let $S$ denote the output of Algorithm~\ref{alg:streaming_with_tau}, then $\bE[f(S)] \geq (1 - \Pr[\cE]) \cdot \tau$.
\end{observation}

The lower bound given by Observation~\ref{obs:simple_bound} is decreasing in $\Pr[\cE]$. Proposition~\ref{prop:involved_bound} provides another lower bound for $\bE[f(S)]$ which increases with $\Pr[\cE]$. An important ingredient of the proof of this proposition is the next observation, which implies that the solution produced by Algorithm~\ref{alg:streaming_with_tau} is always of size smaller than $k$ when $\cE$ happens.
\begin{observation} \label{obs:S_not_full}
If at some point Algorithm~\ref{alg:streaming_with_tau} has a set $S$ of size $k$, then $f(S) \geq \tau$.
\end{observation}

The proof of Proposition~\ref{prop:involved_bound} is based on the above observation and on the observation that the random arrival order implies that every time that an element of $OPT$ arrives in the stream we may assume it is a random element out of all the $OPT$ elements that did not arrive yet. 
\begin{proposition} \label{prop:involved_bound}
For the set $S$ produced by Algorithm~\ref{alg:streaming_with_tau},
\[
	\bE[f(S)]
	\geq
	\frac{1}{2} \cdot \left(\gamma \cdot [\Pr[\cE] - e^{-\gamma/2}] \cdot f(OPT) - 2\tau \right)
	\enspace.
\]
\end{proposition}

The theorem now follows by showing that for every possible value of $\Pr[\cE]$ the guarantee of the theorem is implied by either Observation~\ref{obs:simple_bound} or Proposition~\ref{prop:involved_bound}. Specifically, the former happens when $\Pr[\cE] \leq 2 - \sqrt{2 - e^{-\gamma/2}}$ and the later when $\Pr[\cE] \geq 2 - \sqrt{2 - e^{-\gamma/2}}$.
\end{proof}

\subsection{Algorithm without access to \texorpdfstring{$\tau$}{tau}} \label{ssc:alg_no_Tau}

In this section we explain how to get an algorithm which does not depend on $\tau$. Instead, $\algOurs$ (Algorithm~\ref{alg:streaming_without_tau}) receives an accuracy parameter $\eps \in (0, 1)$. Then, it uses $\eps$ to run several instances of Algorithm \ref{alg:streaming_with_tau} stored in a collection denoted by $I$. The algorithm maintains two variables throughout its execution: $m$ is the maximum value of a singleton set corresponding to an element that the algorithm already observed, and $u_m$ references an arbitrary element 
satisfying $f(u_m)=m$.

\begin{algorithm}[tb]
	\caption{\enspace$\algOurs(f, k, \eps$)} \label{alg:streaming_without_tau}
   \label{alg:example}
\begin{algorithmic}
	\STATE Let $m \gets 0$, and let $I$ be an (originally empty) collection of instances of Algorithm~\ref{alg:streaming_with_tau}.\\
   \WHILE{there are more elements}
   \STATE Let $u$ be the next element.
   \IF{$f(u) \geq m$}
	   \STATE Update $m \gets f(u)$ and $u_m \gets u$.
	\ENDIF
   \STATE Update $I$ so that it contains an instance of Algorithm~\ref{alg:streaming_with_tau} with $\tau = x$ for every $x \in \{(1 - \eps)^i \mid i \in \bZ \text{ and } (1 - \eps)m/(9k^2) \leq (1 - \eps)^i \leq mk\}$, 
   as explained in Section~\ref{ssc:alg_no_Tau}.
   \STATE Pass $u$ to all instances of Algorithm \ref{alg:streaming_with_tau} in $I$.
   \ENDWHILE
   \STATE {\bfseries return:} the best set among all the outputs of the instances of Algorithm~\ref{alg:streaming_with_tau} in $I$ and the singleton set $\{u_m\}$.
\end{algorithmic}
\end{algorithm}

The collection $I$ is updated as follows after each element arrival. If previously $I$ contained an instance of Algorithm~\ref{alg:streaming_with_tau} with a given value for $\tau$, and it no longer should contain such an instance, then the instance is simply removed. In contrast, if $I$ did not contain an instance of Algorithm~\ref{alg:streaming_with_tau} with a given value for $\tau$, and it should now contain such an instance, then a new instance with this value for $\tau$ is created. Finally, if $I$ contained an instance of Algorithm~\ref{alg:streaming_with_tau} with a given value for $\tau$, and it should continue to contain such an instance, then this instance remains in $I$ as is.

\begin{theorem}\label{thm:streakRatio}
The approximation ratio of $\algOurs$ is at least
\[
	(1 - \eps)\gamma \cdot \frac{3 - e^{-\gamma/2} - 2\sqrt{2 - e^{-\gamma/2}}}{2}
	\enspace.
\]
\end{theorem}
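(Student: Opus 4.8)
The plan is to derive Theorem~\ref{thm:streakRatio} from Theorem~\ref{thm:aided_result}. Recall that $\algOurs$ returns the best set among the outputs of a family of instances of Algorithm~\ref{alg:streaming_with_tau} together with the singleton $\{u_m\}$; I would exhibit in that family an instance whose threshold $\tau$ is a good guess of $a\gamma\cdot f(OPT)$ and which has processed the \emph{entire} stream, or else argue that $\{u_m\}$ alone already attains the bound. The piece of arithmetic that makes the constants line up is the identity
\[
	a\cdot\bigl(\sqrt{2 - e^{-\gamma/2}} - 1\bigr)
	= \frac{\bigl(\sqrt{2 - e^{-\gamma/2}} - 1\bigr)^2}{2}
	= \frac{3 - e^{-\gamma/2} - 2\sqrt{2 - e^{-\gamma/2}}}{2}
	\enspace,
\]
so that any instance with $\tau\in[(1-\eps)a\gamma f(OPT),\,a\gamma f(OPT)]$ fed the whole stream outputs, by Theorem~\ref{thm:aided_result}, a set of expected value at least $\tau\cdot(\sqrt{2 - e^{-\gamma/2}} - 1)\ge(1-\eps)\gamma\cdot\frac{3 - e^{-\gamma/2} - 2\sqrt{2 - e^{-\gamma/2}}}{2}\cdot f(OPT)$, which is exactly the claimed ratio times $f(OPT)$.

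Next I would pin down which guesses are actually available in the grid. Let $m$ denote the final value of the variable $m$ in $\algOurs$. Since $\{u_m\}$ is feasible, $m\le f(OPT)$; and $\gamma$-weak submodularity with $L=\varnothing$, $S=OPT$ gives $f(OPT)\le\gamma^{-1}\sum_{o\in OPT}f(o)\le(k/\gamma)\,m$. Now split into two cases. If $a\gamma f(OPT)<m/(9k^2)$, then the target value $(1-\eps)\gamma\cdot\frac{3 - e^{-\gamma/2} - 2\sqrt{2 - e^{-\gamma/2}}}{2}\cdot f(OPT)=(1-\eps)\bigl(\sqrt{2 - e^{-\gamma/2}} - 1\bigr)\cdot a\gamma f(OPT)$ is strictly below $m/(9k^2)<m=f(\{u_m\})$, so the returned singleton already suffices. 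Otherwise $a\gamma f(OPT)\ge m/(9k^2)$; let $\tau^{*}$ be the largest power of $(1-\eps)$ not exceeding $a\gamma f(OPT)$, so $\tau^{*}\in[(1-\eps)a\gamma f(OPT),\,a\gamma f(OPT)]$. Using the two bounds on $f(OPT)$ one checks $\tau^{*}\le a\gamma f(OPT)\le a k m<mk$ and $\tau^{*}>(1-\eps)a\gamma f(OPT)\ge(1-\eps)m/(9k^2)$, so at the end of the stream $\tau^{*}$ lies in the admissible window $[(1-\eps)m/(9k^2),\,mk]$; since $m$ is non-decreasing and never exceeds its final value, the instance with threshold $\tau^{*}$ is created at some point and is never afterwards removed from $I$.

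The hard part will be the remaining gap: instances are created lazily, so the $\tau^{*}$-instance enters $I$ only at the first step at which $m$ reaches $\tau^{*}/k$, and thus misses a (possibly long) prefix of the stream, every element of which has singleton value below $\tau^{*}/k$. For a submodular objective — as in $\algSieve$ — one sidesteps this by noting such elements have marginal value below $\tau^{*}/k$ at every later point and so would never have been selected anyway; but under mere $\gamma$-weak submodularity a small singleton can acquire large marginal value once other elements are present, so that shortcut is unavailable. I would instead re-run the argument behind Theorem~\ref{thm:aided_result} — in particular Proposition~\ref{prop:involved_bound}, whose proof exploits the uniformly random arrival of $OPT$-elements — for the truncated stream actually seen by the $\tau^{*}$-instance, while still comparing against the true $f(OPT)$, and control the $OPT$-mass sitting in the missed prefix using the $(1-\eps)$ factor and the $1/(9k^2)$ lower limit that were deliberately built into the grid. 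I expect essentially all of the real difficulty to live in this step; granting it, the case analysis of the previous paragraph together with the identity in the first paragraph closes the proof.
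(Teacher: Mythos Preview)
Your overall strategy---exhibit in $I$ an instance whose threshold $\tau^*$ lies in $[(1-\eps)a\gamma f(OPT),\,a\gamma f(OPT)]$ and invoke Theorem~\ref{thm:aided_result}, with $\{u_m\}$ covering the residual case---is exactly what the paper does, and your case split on whether $a\gamma f(OPT)<m/(9k^2)$ is if anything slightly cleaner than the paper's split on $\gamma\gtrless \tfrac{4}{3}k^{-1}$.

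However, the ``hard part'' you isolate is a phantom. The $\tau^*$-instance is created precisely at the first step where $m\geq\tau^*/k$, so every element $u$ that arrived \emph{strictly} earlier has $f(\{u\})<\tau^*/k$, and hence $f(u\mid\varnothing)=f(\{u\})-f(\varnothing)\le f(\{u\})<\tau^*/k$. Now imagine that same instance had existed from time zero: at the first prefix element it has $S=\varnothing$, the threshold test fails by the inequality above, and $S$ stays $\varnothing$; inductively $S$ remains $\varnothing$ throughout the whole prefix. Thus the late-created instance and a hypothetical full-stream instance are in identical state ($S=\varnothing$) at the moment the real one is born, and they coincide from then on, so Theorem~\ref{thm:aided_result} applies to the real instance verbatim. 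This is precisely the paper's argument in Lemma~\ref{lem:good_tau_approximation}, and it uses only $f(\varnothing)\ge 0$---nothing about (weak) submodularity. Your concern that ``a small singleton can acquire large marginal value once other elements are present'' is irrelevant here, because for the full-stream instance no other elements \emph{are} present during the prefix. The workaround you sketch---rerunning Proposition~\ref{prop:involved_bound} on the truncated stream and controlling the $OPT$-mass in the prefix via the $1/(9k^2)$ grid boundary---is therefore unnecessary (and would in any case be delicate, since conditioning on which elements land in the prefix interacts badly with the random-order assumption underpinning Proposition~\ref{prop:involved_bound}).
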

The proof of Theorem~\ref{thm:streakRatio} shows that in the final collection $I$ there is an instance of Algorithm~\ref{alg:streaming_with_tau} whose $\tau$ provides a good approximation for $a\gamma \cdot f(OPT)$, and thus, this instance of Algorithm~\ref{alg:streaming_with_tau} should (up to some technical details) produce a good output set in accordance with Theorem~\ref{thm:aided_result}. 

It remains to analyze the space complexity and running time of $\algOurs$. We concentrate on bounding the number of elements $\algOurs$ keeps in its memory at any given time, as this amount dominates the space complexity as long as we assume that the space necessary to keep an element is at least as large as the space necessary to keep each one of the numbers used by the algorithm.
\begin{theorem}\label{thm:streakSpace}
The space complexity of $\algOurs$ is $\bigO(\eps^{-1}k\log k)$ elements.
\end{theorem}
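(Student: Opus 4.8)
The plan is to reduce the space bound to a count of the instances of Algorithm~\ref{alg:streaming_with_tau} that $\algOurs$ keeps in the collection $I$. Each such instance only ever adds an element to its set $S$ while $|S| < k$, so it holds at most $k$ elements at any moment; the reference $u_m$ contributes one more element, and all remaining state of $\algOurs$ (the value $m$, the value $\tau$ of each instance, and the indices used to maintain $I$) consists of numbers, which — under the accounting convention stated just before Theorem~\ref{thm:streakSpace} — cost no more than a constant number of elements each. Hence at every point of the execution the memory footprint is at most $k\cdot|I| + \bigO(1)$ elements, and it suffices to show $|I| = \bigO(\eps^{-1}\log k)$.

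To bound $|I|$, recall that $I$ contains exactly one instance for each value $\tau = (1-\eps)^i$ with $i \in \bZ$ and $(1-\eps)m/(9k^2) \le (1-\eps)^i \le mk$. If $m = 0$ this range is empty, so assume $m > 0$. The number of integer powers of $(1-\eps)$ lying in an interval $[\ell, u]$ with $0 < \ell \le u$ is at most $\log_{1/(1-\eps)}(u/\ell) + 1$, and here the ratio $u/\ell = \tfrac{mk}{(1-\eps)m/(9k^2)} = \tfrac{9k^3}{1-\eps}$ does not depend on $m$. Therefore
\[
	|I| \;\le\; \frac{\ln\!\big(9k^3/(1-\eps)\big)}{\ln\!\big(1/(1-\eps)\big)} + 1 \;=\; \frac{\ln(9k^3)}{\ln\!\big(1/(1-\eps)\big)} + 2 \enspace,
\]
and invoking the elementary inequality $\ln(1/(1-\eps)) \ge \eps$ for $\eps \in (0,1)$ bounds the right-hand side by $(3\ln k + \ln 9)/\eps + 2 = \bigO(\eps^{-1}\log k)$. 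Combining with the previous paragraph yields the claimed $\bigO(\eps^{-1}k\log k)$ elements.

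This is essentially a counting argument, so the only points that need a little care are bookkeeping ones. First, the factor $1/(1-\eps)$ appearing inside the logarithm in the numerator is harmless: although $\ln(1/(1-\eps))$ may be large when $\eps$ is close to $1$, it occurs identically in numerator and denominator and so contributes only an additive constant to $|I|$. Second, the collection $I$ is dynamic — instances are removed and created as $m$ increases — but the estimate above holds for $I$ at \emph{every} moment, and the transient replacement of one bounded-size family of instances by another does not change the asymptotics; moreover no appeal to the approximation analysis of Theorem~\ref{thm:streakRatio} is required here. I expect the write-up to be short, with the displayed chain of inequalities being the substantive step.
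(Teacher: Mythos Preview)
Your proposal is correct and follows essentially the same approach as the paper: reduce to bounding $|I|$, note each instance stores at most $k$ elements plus the single $u_m$, handle $m=0$ separately, and for $m>0$ count the powers of $1-\eps$ in the threshold range using the inequality $\ln(1/(1-\eps))\ge\eps$ (equivalently, the paper's $\ln(1-\eps)\le -\eps$). The computations and resulting bound match the paper's proof line by line.
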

The running time of Algorithm~\ref{alg:streaming_with_tau} is $\bigO(Nf)$ where, abusing notation, $f$ is the running time of a single oracle evaluation of $f$. Therefore, the running time of $\algOurs$ is $\bigO(Nf\eps^{-1} \log k)$ since it uses at every given time only $\bigO(\eps^{-1} \log k)$ instances of the former algorithm. Given multiple threads, this can be improved to $\bigO(Nf + \eps^{-1}\log k)$ by running the $\bigO(\eps^{-1} \log k)$ instances of Algorithm~\ref{alg:streaming_with_tau} in parallel.

\section{Experiments}
We evaluate the performance of our streaming algorithm on two sparse feature selection applications.\footnote{Code for these experiments is available at \url{https://github.com/eelenberg/streak}.}
Features are passed to all algorithms in a random order to match the setting of Section \ref{sec:algs}.

\begin{figure}[ht]
	\vskip 0.1in
		\centering
			\subfigure[\label{fig:first}Performance]{\includegraphics[width=0.86\fwidth]{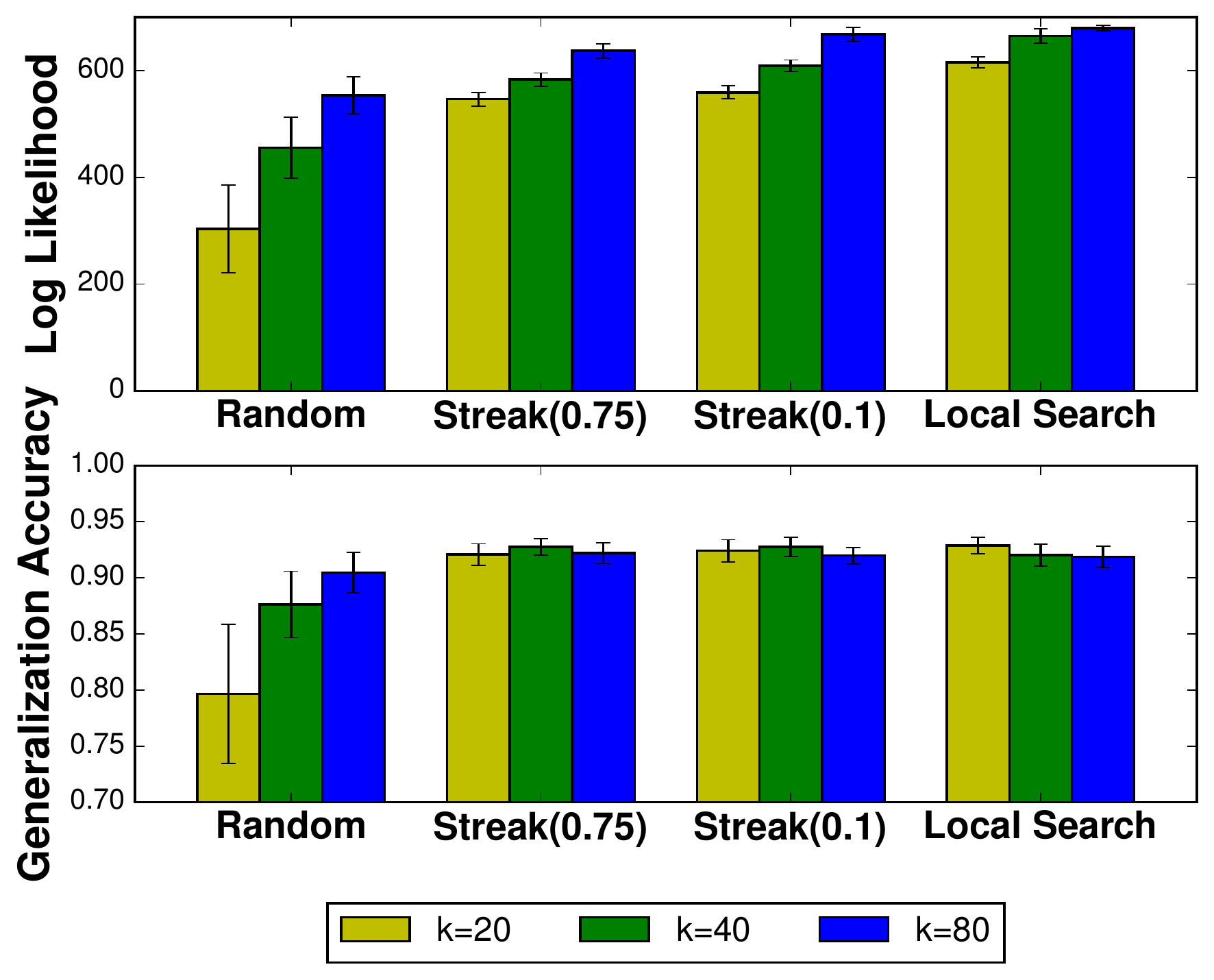}}
			\subfigure[\label{fig:second}Cost]{\includegraphics[width=0.86\fwidth]{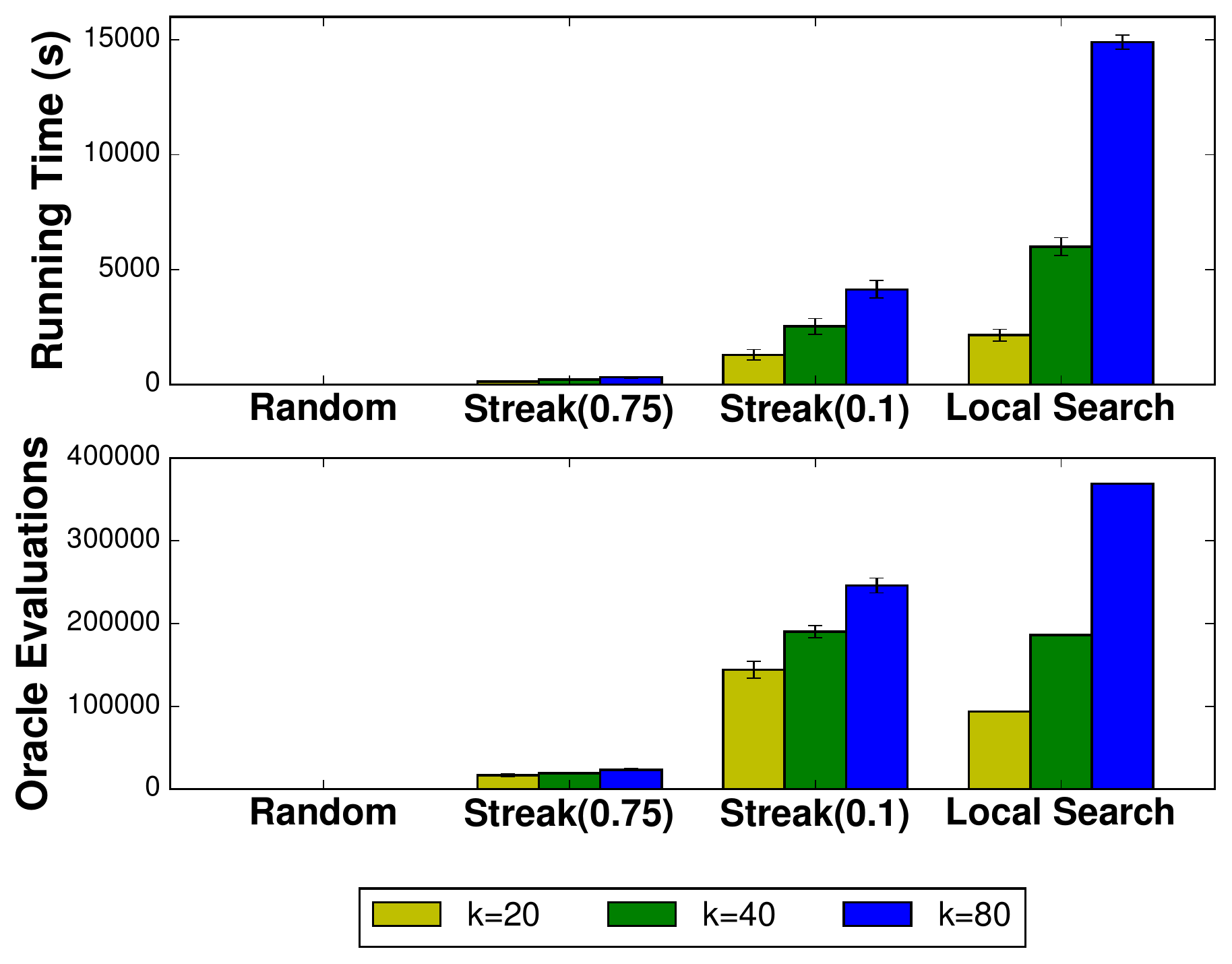}}
		\caption{Logistic Regression, Phishing dataset with pairwise feature products. Our algorithm is comparable to $\algLocal$ in both log likelihood and generalization accuracy, with much lower running time and number of model fits in most cases. Results averaged over $40$ iterations, error bars show $1$ standard deviation.}
	\vskip -0.2in
\end{figure} 

\begin{figure}[ht]
		\centering
			\subfigure[\label{fig:runtimeAccuracy}Sparse Regression]{\includegraphics[width=0.9\fwidth]{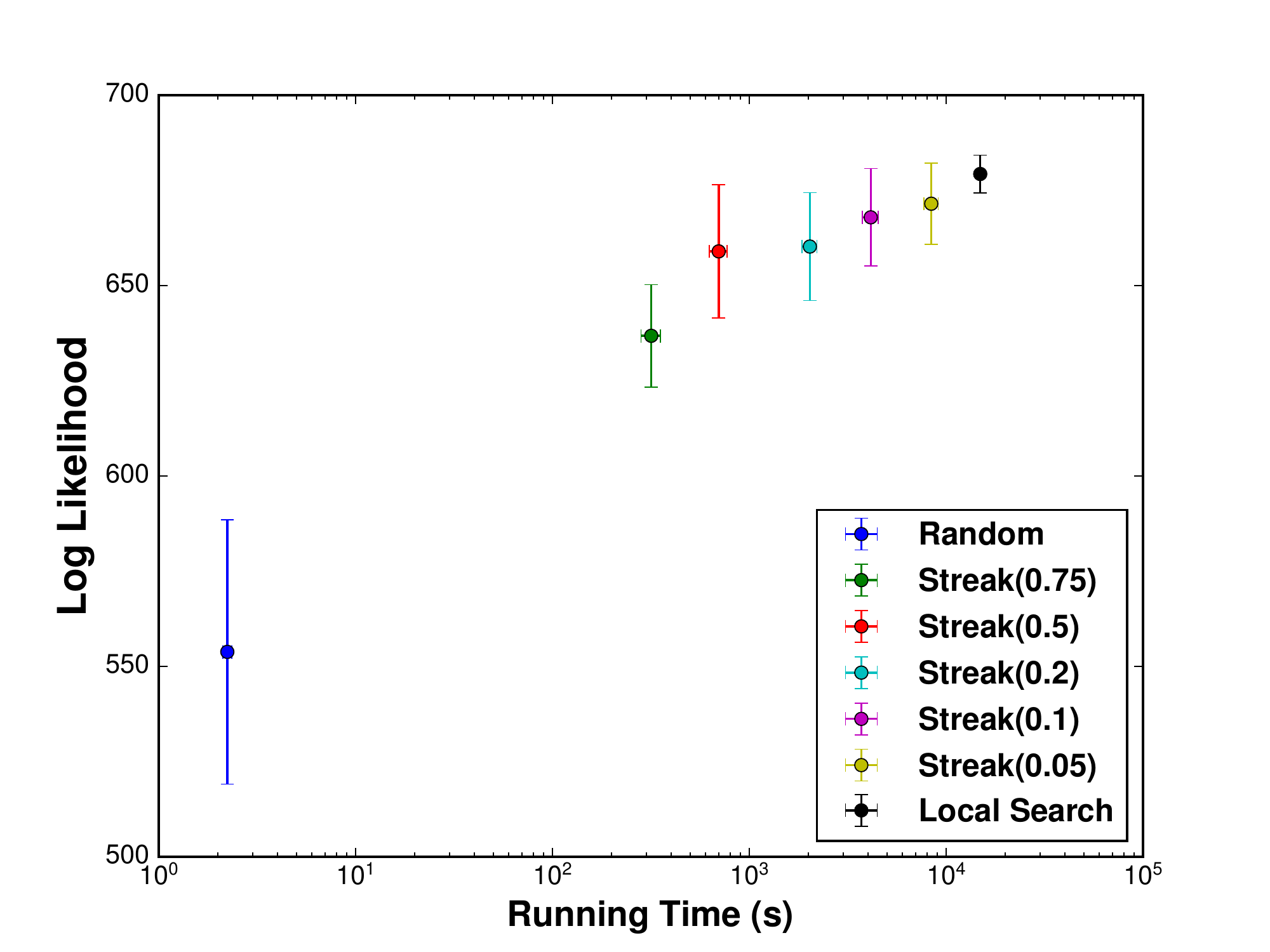}}
			\subfigure[\label{fig:limeTimes}Interpretability]{\includegraphics[width=0.9\fwidth]{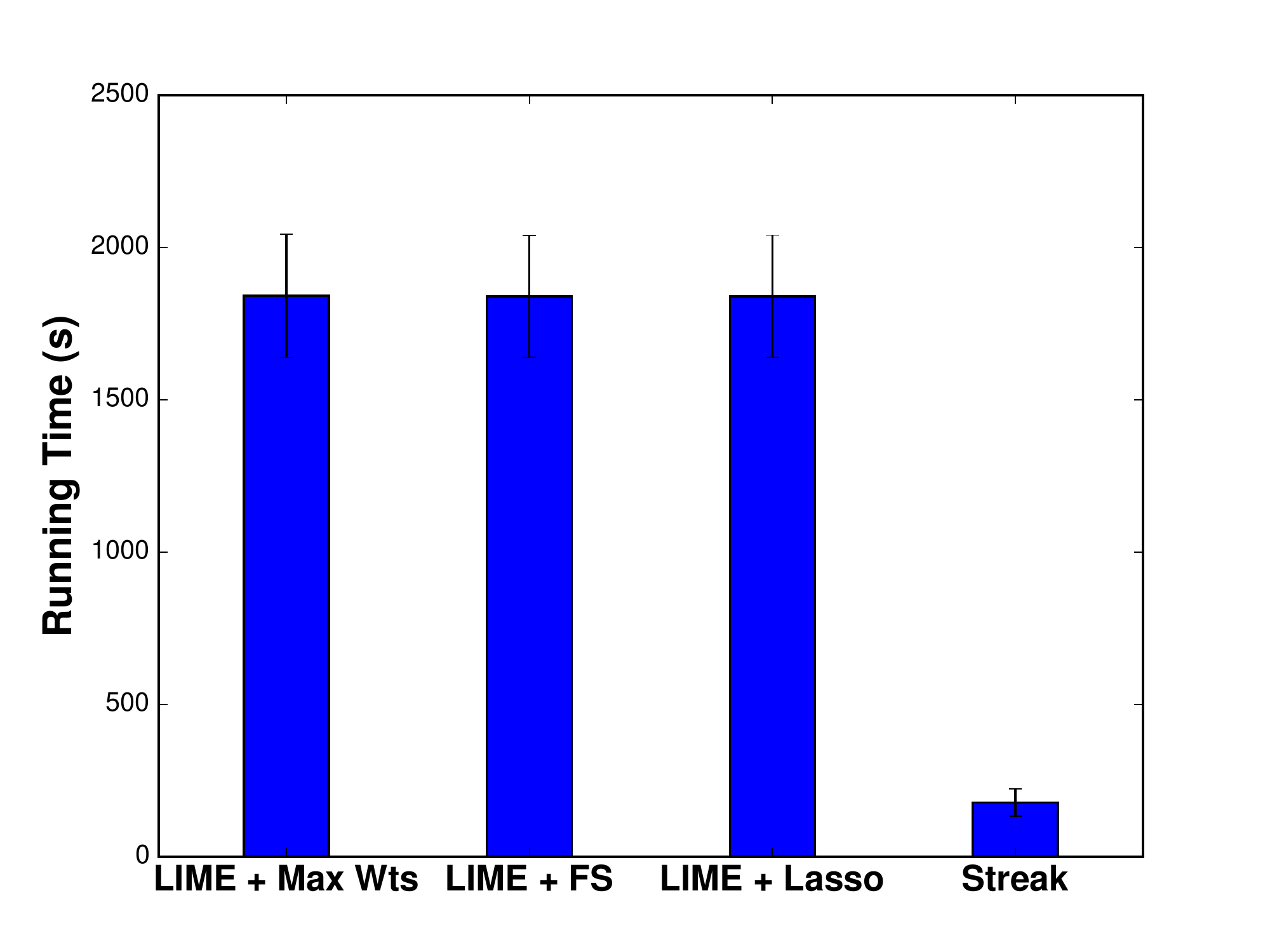}}
		\caption{
		\ref{fig:runtimeAccuracy}: Logistic Regression, Phishing dataset with pairwise feature products, $k=80$ features. By varying the parameter $\eps$, our algorithm captures a time-accuracy tradeoff between $\algRand$ and $\algLocal$. Results averaged over $40$ iterations, standard deviation shown with error bars.
		\ref{fig:limeTimes}:~Running times of interpretability algorithms on the Inception V3 network, $N=30$, $k=5$. Streaming maximization runs $10$ times faster than the LIME framework. Results averaged over $40$ total iterations using $8$ example explanations, error bars show $1$ standard deviation.}
	\vskip -0.2in
\end{figure}

\subsection{Sparse Regression with Pairwise Features}\label{ssc:logistic}

In this experiment, a sparse logistic regression is fit on $2000$ training and $2000$ test observations from the Phishing dataset \citep{UCI}. This setup is known to be weakly submodular under mild data assumptions \citep{Elenberg:2016full}. First, the categorical features are one-hot encoded, increasing the feature dimension to $68$. Then, all pairwise products are added for a total of $N=4692$ features. To reduce computational cost, feature products are generated and added to the stream on-the-fly as needed. We compare with $2$ other algorithms. $\algRand$ selects the first $k$ features from the random stream. 
$\algLocal$ first fills a buffer with the first $k$ features, and then swaps each incoming feature with the feature from the buffer which yields the largest nonnegative improvement.

\begin{figure*}[ht]
	\centering
	\subfigure[\label{fig:ex1first}]{\includegraphics[width=.93\fwidth,trim={3.75cm .25cm 3.75cm .25cm},clip]{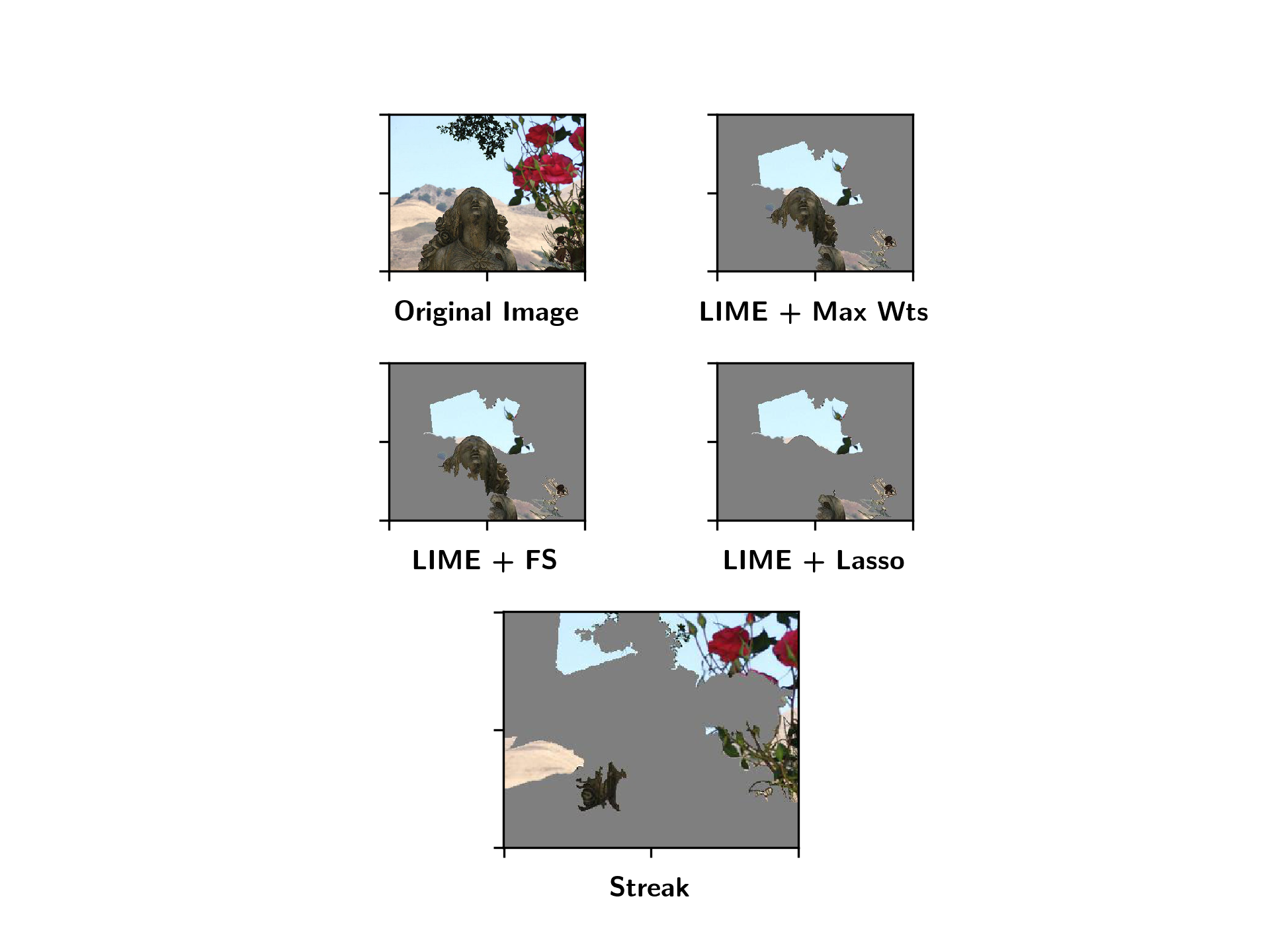}}
	\subfigure[\label{fig:ex1second}]{\includegraphics[width=.93\fwidth,trim={3.75cm .25cm 3.75cm .25cm},clip]{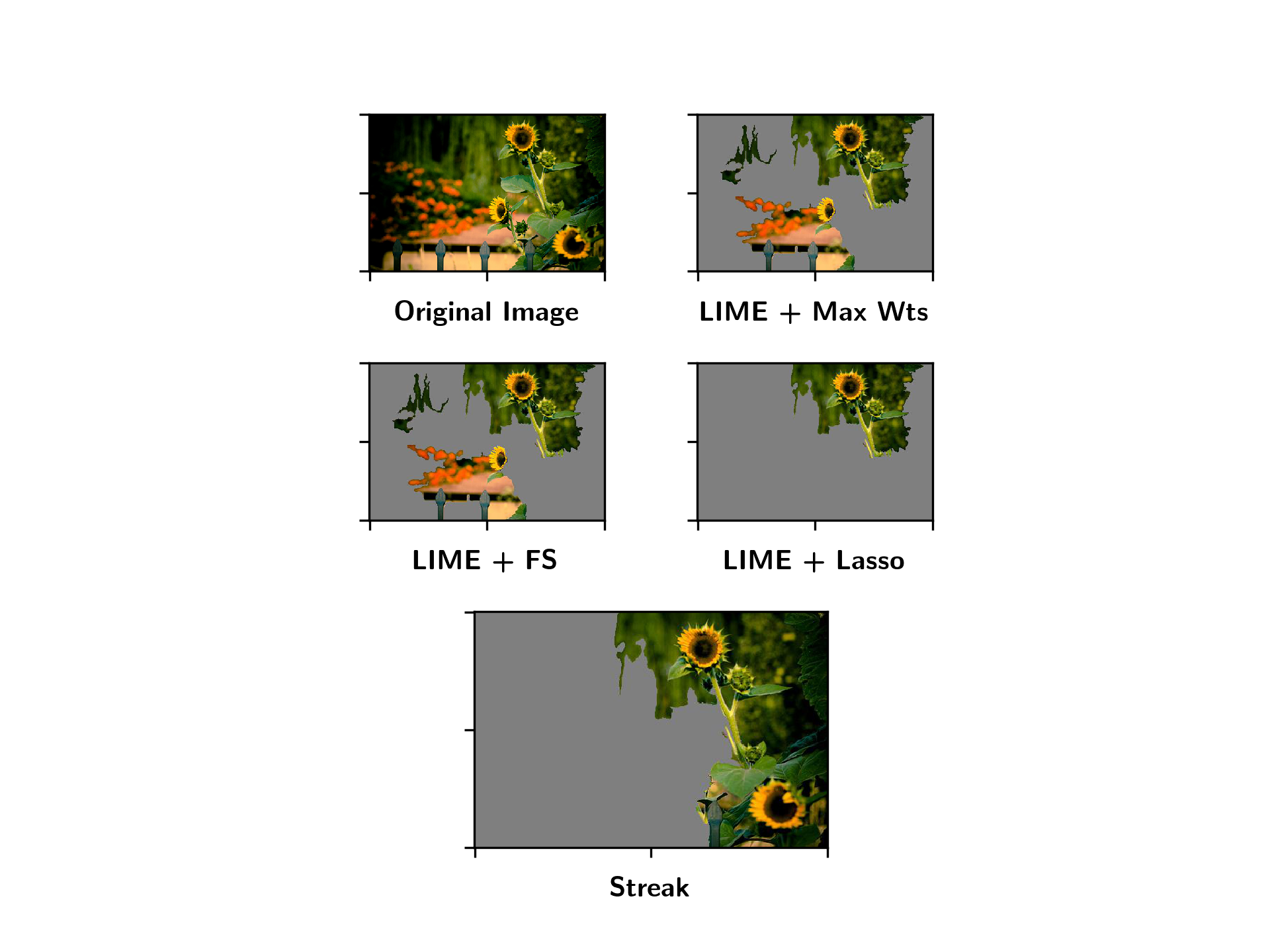}} \\
	\subfigure[\label{fig:ex1third}]{\includegraphics[width=0.83\fwidth,trim={4.25cm .5cm 4.25cm .5cm},clip]{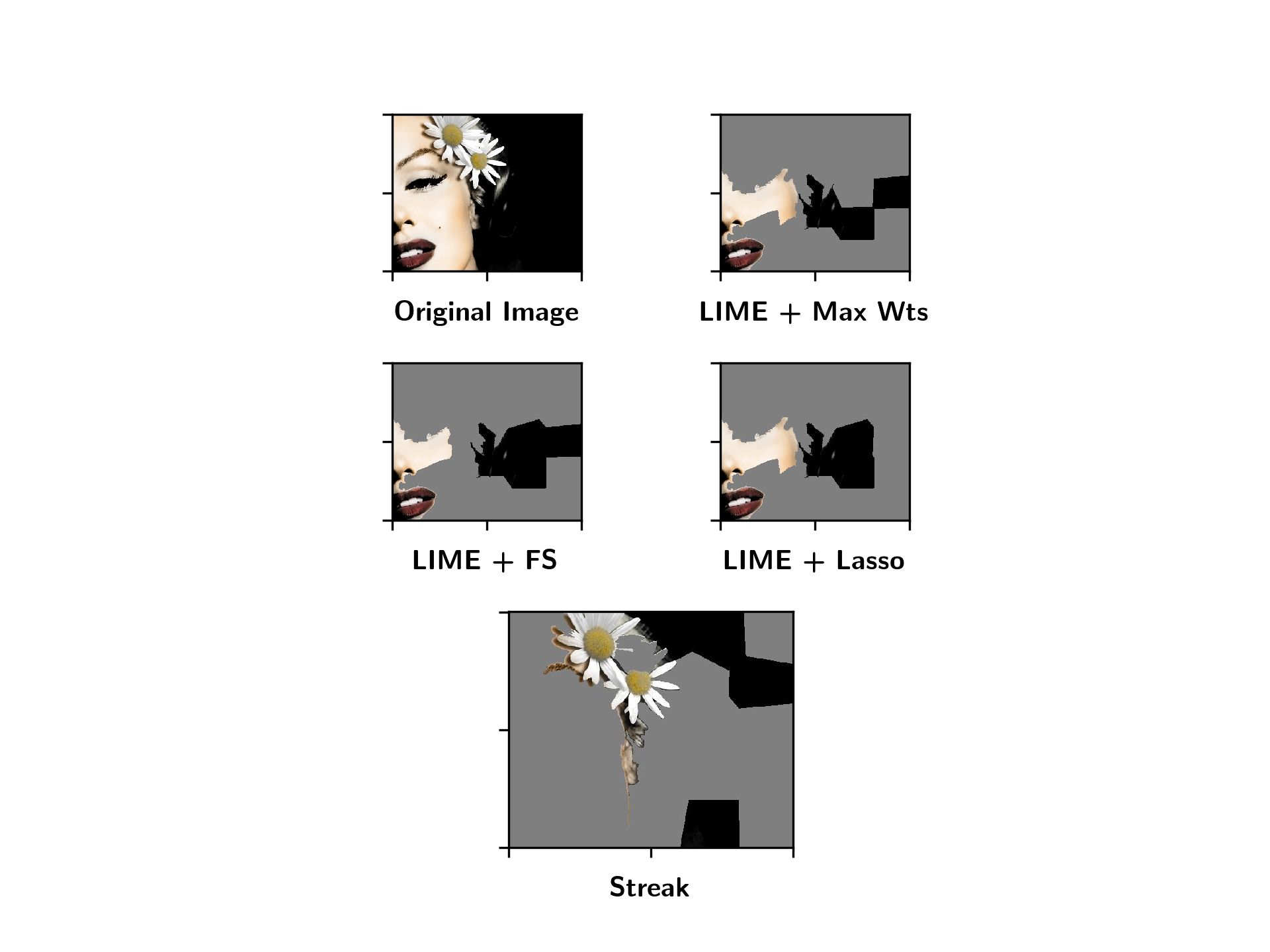}} \qquad 
	\subfigure[\label{fig:ex1fourth}]{\includegraphics[width=0.83\fwidth,trim={4.25cm .5cm 4.25cm .5cm},clip]{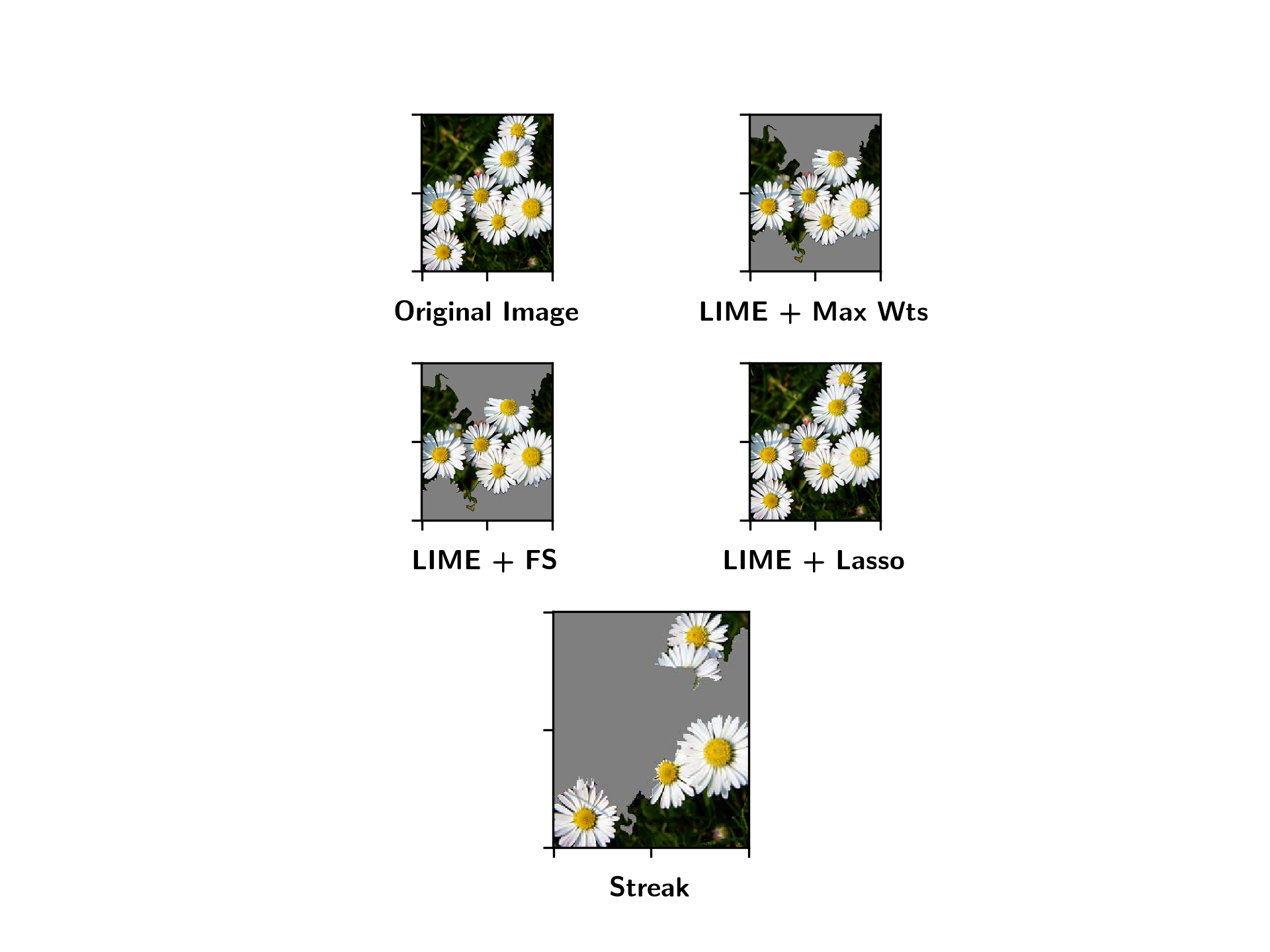}}
	\caption{Comparison of interpretability algorithms for the Inception V3 deep neural network. We have used transfer learning to extract features from Inception and train a flower classifier. 
		In these four input images the flower types were correctly classified (from (a) to (d): rose, sunflower, daisy, and daisy). 
		We ask the question of interpretability: \textit{why} did this model classify this image as rose. 
		We are using our framework (and the recent prior work LIME~\citep{lime}) to see which parts of the image the neural network is looking at for these classification tasks. As can be seen $\algOurs$ correctly identifies the flower parts of the images while some LIME variations do not. More importantly, $\algOurs$ is creating subsampled images on-the-fly, and hence, runs approximately $10$ times faster. Since interpretability tasks perform multiple calls to the black-box model, the running times can be quite significant.}
	\label{fig:examples1}
\end{figure*}

Figure~\ref{fig:first} shows both the final log likelihood and the generalization accuracy for $\algRand$, $\algLocal$, and our $\algOurs$ algorithm for $\eps=\{0.75,0.1\}$ and $k=\{20,40,80\}$. As expected, the $\algRand$ algorithm has much larger variation since its performance depends highly on the random stream order. It also performs significantly worse than $\algLocal$ for both metrics, whereas $\algOurs$ is comparable for most parameter choices.
Figure~\ref{fig:second} shows two measures of computational cost: running time and the number of oracle evaluations (regression fits). We note $\algOurs$ scales better as $k$ increases; for example, $\algOurs$ with $k=80$ and $\eps=0.1$ ($\eps=0.75$) runs in about $70\%$ ($5\%$) of the time it takes to run $\algLocal$ with $k=40$. 
Interestingly, our speedups are more substantial with respect to running time. In some cases $\algOurs$ actually fits more regressions than $\algLocal$, but still manages to be faster. We attribute this to the fact that nearly all of $\algLocal$'s regressions involve $k$ features, which are slower than many of the small regressions called by $\algOurs$.

Figure~\ref{fig:runtimeAccuracy} shows the final log likelihood versus running time for $k=80$ and $\eps \in [0.05,0.75]$. By varying the precision $\eps$, we achieve a gradual tradeoff between speed and performance. This shows that $\algOurs$ can reduce the running time by over an order of magnitude with minimal impact on the final log likelihood.

\subsection{Black-Box Interpretability}\label{ssc:lime}

Our next application is interpreting the predictions of black-box machine learning models. Specifically, we begin with the Inception V3 deep neural network \citep{inceptionV3} trained on ImageNet. We use this network for the task of classifying $5$ types of flowers via transfer learning.  This is done by adding a final softmax layer and retraining the network.

We compare our approach to the LIME framework \citep{lime} for developing sparse, interpretable explanations. 
The final step of LIME is to fit a $k$-sparse linear regression in the space of interpretable features. Here, the features are superpixels determined by the SLIC image segmentation algorithm \citep{slic} (regions from any other segmentation would also suffice). The number of superpixels is bounded by $N=30$.
After a feature selection step, a final regression is performed on only the selected features.
The following feature selection methods are supplied by LIME:
\emph{1. Highest Weights:} fits a full regression and keep the $k$ features with largest coefficients.
\emph{2. Forward Selection:} standard greedy forward selection.
\emph{3. Lasso:} $\ell_1$ regularization.

We introduce a novel method for black-box interpretability that is similar to but simpler than LIME. As before, we segment an image into $N$ superpixels. Then, for a subset $S$ of those regions we can create a new image that contains only these regions and feed this into the black-box classifier. For a given model $M$, an input image $I$, and a label $\mathbf{L}_1$ we ask for an explanation: why did model $M$ label image $I$ with label $\mathbf{L}_1$. 
We propose the following solution to this problem. Consider the set function $f(S)$ giving the likelihood that image $I(S)$ has label $\mathbf{L}_1$. We approximately solve  
\[
\max_{|S|\leq k } f(S) \enspace,
\]
using $\algOurs$. Intuitively, we are limiting the number of superpixels to $k$ so that the output will include only the most important superpixels, and thus, will represent an interpretable explanation. In our experiments we set $k = 5$.

Note that the set function $f(S)$ depends on the black-box classifier and is neither monotone nor submodular in general. Still, we find that the greedy maximization algorithm produces very good explanations for the flower classifier as shown in Figure~\ref{fig:examples1} 
and the additional experiments in the $\appendixName$.
Figure~\ref{fig:limeTimes} shows that our algorithm is much faster than the LIME approach. This is primarily because LIME relies on generating and classifying a large set of randomly perturbed example images.

\section{Conclusions}
We propose $\algOurs$, the first streaming algorithm for maximizing weakly submodular functions, and prove that it achieves a constant factor approximation assuming a random stream order. This is useful when the set function is not submodular and, additionally, takes a long time to evaluate or has a very large ground set. Conversely, we show that under a worst case stream order no algorithm with memory sublinear in the ground set size has a constant factor approximation.
We formulate interpretability of black-box neural networks as set function maximization, and show that $\algOurs$ provides interpretable explanations faster than previous approaches. 
We also show experimentally that $\algOurs$ trades off accuracy and running time in nonlinear sparse regression.

One interesting direction for future work is to tighten the bounds of Theorems~\ref{thm:aided_result} and~\ref{thm:streakRatio}, which are nontrivial but somewhat loose. For example, there is a gap between the theoretical guarantee of the state-of-the-art algorithm for submodular functions and our bound for $\gamma=1$. However, as our algorithm performs the same computation as that state-of-the-art algorithm when the function is submodular, this gap is solely an analysis issue. Hence, the real theoretical performance of our algorithm is better than what we have been able to prove in Section~\ref{sec:algs}.

\section{Acknowledgments}
This research has been supported by NSF Grants CCF 1344364, 1407278, 1422549, 1618689, ARO YIP W911NF-14-1-0258, ISF Grant 1357/16, Google Faculty Research Award, and DARPA Young Faculty Award (D16AP00046).

\clearpage
\bibliography{biblioWS,bibJournal}
\clearpage
\appendix
\section{Appendix}
\subsection{Proof of Lemma~\texorpdfstring{\ref{lem:properties}}{\ref*{lem:properties}}}
	The nonnegativity and monotonicity of $f_k$ follow immediately from the fact that $u(S)$ and $v(S)$ have these properties. 
	Thus, it remains to prove that $f_k$ is $0.5$-weakly submodular for $|\cN_k|$, \textit{i.e.}, that for every 
	pair of arbitrary sets $S, L \subseteq \cN_k$ it holds that
	\[
	\sum_{w \in S \setminus L} f_k(w \mid L)
	\geq
	0.5 \cdot f_k(S \mid L)
	\enspace.
	\]
	There are two cases to consider. 
	The first case is that $f_k(L) = 2 \cdot u(L) + 1$. In this case $S \setminus L$ must contain at least $\lceil f_k(S \mid L) / 2 \rceil$ elements of $\{u_i\}_{i = 1}^k$. Additionally, the marginal contribution to $L$ of every element of $\{u_i\}_{i = 1}^k$ which does not belong to $L$ is at least $1$. Thus, we get
	\begin{align*}
	\sum_{w \in S \setminus L} f_k(w \mid L)
	&\geq
	\sum_{w \in (S \setminus L) \cap \{u_i\}_{i = 1}^k} \mspace{-36mu} f_k(w \mid L)
	\geq
	|(S \setminus L) \cap \{u_i\}_{i = 1}^k| \\
	&\geq
	\lceil f_k(S \mid L) / 2 \rceil
	\geq
	0.5 \cdot f_k(S \mid L)
	\enspace.
	\end{align*}
	The second case is that $f_k(L) = 2 \cdot v(L)$. In this case $S \setminus L$ must contain at least $\lceil f_k(S \mid L) / 2 \rceil$ elements of $\{v_i\}_{i = 1}^k$, and in addition, the marginal contribution to $L$ of every element of $\{v_i\}_{i = 1}^k$ which does not belong to $L$ is at least $1$. Thus, we get in this case again
	\begin{align*}
	\sum_{w \in S \setminus L} f_k(w \mid L)
	&\geq
	\sum_{w \in (S \setminus L) \cap \{v_i\}_{i = 1}^k} \mspace{-36mu} f_k(w \mid L)
	\geq
	|(S \setminus L) \cap \{v_i\}_{i = 1}^k| \\
	&\geq
	\lceil f_k(S \mid L) / 2 \rceil
	\geq
	0.5 \cdot f_k(S \mid L)
	\enspace.
	\tag*{\qed}
	\end{align*}

\subsection{Proof of Theorem~\texorpdfstring{\ref{thm:impossibility}}{\ref*{thm:impossibility}}}

Consider an arbitrary (randomized) streaming algorithm $\ALG$ aiming to maximize $f_k(S)$ subject to the cardinality constraint $|S| \leq 2k$. Since $ALG$ uses $o(N)$ memory, we can guarantee, by choosing a large enough $d$, that $\ALG$ uses no more than $(c/4) \cdot N$ memory. In order to show that $\ALG$ performs poorly, consider the case that it gets first the elements of $\{u_i\}_{i = 1}^k$ and the dummy elements (in some order to be determined later), and only then it gets the elements of $\{v_i\}_{i = 1}^k$. The next lemma shows that some order of the elements of $\{u_i\}_{i = 1}^k$ and the dummy elements is bad for $\ALG$.

\begin{lemma}\label{lem:streamOrder}
There is an order for the elements of $\{u_i\}_{i = 1}^k$ and the dummy elements which guarantees that in expectation $\ALG$ returns at most $(c/2) \cdot k$ elements of $\{u_i\}_{i = 1}^k$.
\end{lemma}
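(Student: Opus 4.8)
\noindent\textbf{Proof proposal for Lemma~\ref{lem:streamOrder}.}

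The plan is to use the probabilistic method. I will feed the $k+d$ ``phase-one'' elements (the copies of $\{u_i\}_{i=1}^k$ together with the $d$ dummies) to $\ALG$ in a uniformly random order and show that the expected number of elements of $\{u_i\}_{i=1}^k$ that $\ALG$ eventually returns is at most $(c/2)k$; then some fixed order attains this bound. The crucial observation is that while no element of $\{v_i\}_{i=1}^k$ has arrived, every subset $S$ of the elements seen so far has $v(S)=0$, hence $f_k(S)=\min\{2u(S)+1,\,0\}=0$. Thus every oracle query made by $\ALG$ during phase one is answered by $0$, regardless of which phase-one elements are genuine $u_i$'s and which are dummies. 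Since $\ALG$ accesses the elements only through the oracle, the (random, coin-dependent) set $M$ of elements it holds in memory at the end of phase one is statistically independent of the partition of the phase-one elements into $\{u_i\}$'s and dummies. Concretely, I would fix an arbitrary arrival order for the $k+d$ phase-one elements and let the $k$-subset $U$ playing the role of $\{u_i\}_{i=1}^k$ be chosen uniformly at random and independently of $\ALG$'s internal randomness; feeding the elements in a uniformly random order realizes exactly such a coupling.

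Next I bound $\bE[|U\cap M|]$. By the paragraph preceding the lemma, choosing $d$ large enough guarantees that $\ALG$ stores at most $(c/4)N$ elements, so $|M|\le (c/4)N$ always. Because $U$ is a uniformly random $k$-subset of the $k+d$ phase-one elements and is independent of $M$, linearity of expectation gives $\bE[|U\cap M|]=\tfrac{k}{k+d}\,\bE[|M|]\le \tfrac{k}{k+d}\cdot\tfrac{c}{4}N$. Since $N=2k+d$ we have $\tfrac{N}{k+d}=\tfrac{2k+d}{k+d}\le 2$ for every $d\ge 0$, hence $\bE[|U\cap M|]\le (c/2)k$. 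Averaging over $U$, there is a fixed choice of $U$ --- equivalently, a fixed order of the $u_i$'s and dummies --- for which the expectation over $\ALG$'s coins of $|U\cap M|$ is at most $(c/2)k$. Finally, to pass from memory contents to the output, note that each $u_i$ appears exactly once in the stream, during phase one, and the streaming model forbids revisiting a discarded element; therefore the set of $u_i$'s present in $\ALG$'s memory can only shrink during phase two, and the returned set is contained in the final memory. Hence the number of returned $u_i$'s is at most $|U\cap M|$, and taking expectations over $\ALG$'s coins yields the lemma.

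The only genuinely delicate point is the independence claim --- correctly formalizing that an oracle-model algorithm cannot exploit element identities to distinguish $u_i$'s from dummies during phase one, so that $M$'s distribution does not depend on the chosen partition. Everything else is averaging, the elementary inequality $\tfrac{2k+d}{k+d}\le 2$, and the single-pass property. Note also that $d$ large is needed only to enforce the memory bound $|M|\le (c/4)N$; the final estimate $\bE[|U\cap M|]\le (c/2)k$ holds for any $d$.
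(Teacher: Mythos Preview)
Your proposal is correct and follows essentially the same approach as the paper: both exploit that every oracle query during phase one returns $0$ (so $\ALG$ cannot distinguish $u_i$'s from dummies), then use an averaging argument together with the memory bound $(c/4)N$ and the inequality $\tfrac{2k+d}{k+d}\le 2$ to obtain the $(c/2)k$ bound. The paper phrases the averaging slightly differently---it observes that the probability an element survives in memory depends only on its arrival \emph{position} and then places the $u_i$'s at positions whose survival probabilities sum to at most $kM/|W|$---whereas you randomize the partition $U$ and invoke independence; these are equivalent, and your explicit treatment of the final ``memory $\to$ output'' step (via the single-pass property) is a detail the paper leaves implicit.
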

\begin{proof}
	Let $W$ be the set of the elements of $\{u_i\}_{i = 1}^k$ and the dummy elements. Observe that the value of $f_k$ for every subset of $W$ is $0$. Thus, $\ALG$ has no way to differentiate between the elements of $W$ until it views the first element of $\{v_i\}_{i = 1}^k$, which implies that the probability of every element $w \in W$ to remain in $\ALG$'s memory until the moment that the first element of $\{v_i\}_{i = 1}^k$ arrives is determined only by $w$'s arrival position. Hence, by choosing an appropriate arrival order one can guarantee that the sum of the probabilities of the elements of $\{u_i\}_{i = 1}^k$ to be at the memory of $\ALG$ at this point is at most
	\[
		\frac{kM}{|W|}
		\leq
		\frac{k(c / 4) \cdot N}{k + d}
		=
		\frac{k(c / 4) \cdot (2k + d)}{k + d}
		\leq
		\frac{kc}{2}
		\enspace,
	\]
	where $M$	is the amount of memory $\ALG$ uses.
\end{proof}
The expected value of the solution produced by $\ALG$ for the stream order provided by Lemma \ref{lem:streamOrder} is at most $ck + 1$. Hence, its approximation ratio for $k > \nicefrac{1}{c}$ is at most
	\[
	\frac{ck + 1}{2k}= \frac{c}{2} + \frac{1}{2k} < c \enspace. 
	\tag*{\qed}
	\]

\subsection{Proof of Observation \texorpdfstring{\ref{obs:S_not_full}}{\ref*{obs:S_not_full}}}
	Algorithm~\ref{alg:streaming_with_tau} adds an element $u$ to the set $S$ only when the marginal contribution of $u$ with respect to $S$ is at least $\tau / k$. Thus, it is always true that
	\[
	f(S)
	\geq
	\frac{\tau \cdot |S|}{k}
	\enspace.
	\tag*{\qed}
	\]

\subsection{Proof of Proposition \texorpdfstring{\ref{prop:involved_bound}}{\ref*{prop:involved_bound}}}

We begin by proving several intermediate lemmas. 
Recall that $\gamma \triangleq \gamma_{k}$, and
notice that by the monotonicity of $f$ we may assume that $OPT$ is of size $k$. For every $0 \leq i \leq |OPT| = k$, let $OPT_i$ be the random set consisting of the last $i$ elements of $OPT$ according to the input order. Note that $OPT_i$ is simply a uniformly random subset of $OPT$ of size $i$. Thus, we can lower bound its expected value as follows.

\begin{lemma} \label{lem:sub_opt_value}
For every $0 \leq i \leq k$, $\bE[f(OPT_i)] \geq [1 - (1 - \gamma/k)^i] \cdot f(OPT)$.
\end{lemma}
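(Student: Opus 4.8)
The plan is a straightforward induction on $i$. The base case $i=0$ is immediate: $OPT_0 = \varnothing$, so by nonnegativity $f(OPT) - f(OPT_0) \le f(OPT) = (1-\gamma/k)^0 \cdot f(OPT)$, which is exactly the claim rewritten as $\bE[f(OPT) ] - \bE[f(OPT_i)] \le (1-\gamma/k)^i f(OPT)$. For the inductive step it suffices to establish the one-step recursion
\[
f(OPT) - \bE[f(OPT_i)] \;\le\; \left(1 - \tfrac{\gamma}{k}\right)\bigl(f(OPT) - \bE[f(OPT_{i-1})]\bigr) ,
\]
and then unwind it together with the base case. Note $OPT_{i-1} \subseteq OPT_i$ and $|OPT_i \setminus OPT_{i-1}| = 1$, and monotonicity keeps both sides nonnegative throughout (and $\gamma \le \gamma_k \le 1$, so $1-\gamma/k \ge 0$).

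The crux is a conditioning argument exploiting the random stream order. Condition on the set $OPT_{i-1}$ (the last $i-1$ elements of $OPT$ in arrival order). I claim that, conditioned on $OPT_{i-1}$, the newly added element $OPT_i \setminus OPT_{i-1}$ — that is, the $i$-th-from-last element of $OPT$ — is uniformly distributed over $OPT \setminus OPT_{i-1}$; this is because, restricted to the $k-i+1$ elements of $OPT \setminus OPT_{i-1}$, the arrival order is still a uniformly random permutation, so each of them is equally likely to be the last to arrive among them. Hence
\[
\bE[f(OPT_i) - f(OPT_{i-1}) \mid OPT_{i-1}] \;=\; \frac{1}{k-i+1}\sum_{j \in OPT \setminus OPT_{i-1}} f(j \mid OPT_{i-1}) .
\]
Now apply $\gamma$-weak submodularity with $L = OPT_{i-1}$ and $S = OPT$: since $|L| \le k$ and $|S\setminus L| \le k$, the definition of $\gamma \triangleq \gamma_k$ gives $\sum_{j \in OPT \setminus OPT_{i-1}} f(j \mid OPT_{i-1}) \ge \gamma \cdot f(OPT \mid OPT_{i-1}) = \gamma\bigl(f(OPT) - f(OPT_{i-1})\bigr)$. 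Using $k - i + 1 \le k$ to weaken the coefficient yields
\[
\bE[f(OPT_i) - f(OPT_{i-1}) \mid OPT_{i-1}] \;\ge\; \frac{\gamma}{k}\bigl(f(OPT) - f(OPT_{i-1})\bigr) .
\]
Taking expectation over $OPT_{i-1}$ and rearranging gives exactly the recursion above, completing the induction.

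The only genuinely delicate point is the uniformity claim — the conditional distribution of $OPT_i \setminus OPT_{i-1}$ given $OPT_{i-1}$ — which is precisely the "every arriving $OPT$ element is random among those not yet arrived" intuition flagged before Proposition~\ref{prop:involved_bound}; it should be stated via exchangeability of the random order restricted to $OPT \setminus OPT_{i-1}$. Everything else (the weak-submodularity substitution, the coefficient weakening, and the telescoping of the recursion) is routine.
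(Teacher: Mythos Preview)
Your proposal is correct and follows essentially the same approach as the paper: induction on $i$, base case via nonnegativity, condition on $OPT_{i-1}$, use that the added element is uniform over $OPT \setminus OPT_{i-1}$, apply $\gamma$-weak submodularity with $L = OPT_{i-1}$ and $S = OPT$, weaken the denominator $k-i+1$ to $k$, and take expectation. The only cosmetic difference is that you phrase the recursion in terms of the gap $f(OPT) - \bE[f(OPT_i)]$ whereas the paper works directly with $\bE[f(OPT_i)]$, which is the same inequality rearranged.
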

\begin{proof}
We prove the lemma by induction on $i$. For $i = 0$ the lemma follows from the nonnegativity of $f$ since
\[
	f(OPT_0)
	\geq
	0
	=
	[1 - (1 - \gamma/k)^0] \cdot f(OPT)
	\enspace.
\]

Assume now that the lemma holds for some $0 \leq i - 1 < k$, and let us prove it holds also for $i$. Since $OPT_{i - 1}$ is a uniformly random subset of $OPT$ of size $i - 1$, and $OPT_i$ is a uniformly random subset of $OPT$ of size $i$, we can think of $OPT_i$ as obtained from $OPT_{i - 1}$ by adding to this set a uniformly random element of $OPT \setminus OPT_{i - 1}$. Taking this point of view, we get, for every set $T \subseteq OPT$ of size $i - 1$,
\begin{align*}
	\bE[f(OPT_i) \mid OPT_{i - 1} = T]
	&= 
	f(T) + \frac{\sum_{u \in OPT \setminus T} f(u \mid T)}{|OPT \setminus T|} \\
	&\geq
	f(T) + \frac{1}{k} \cdot \sum_{u \in OPT \setminus T} f(u \mid T)\\
	&\geq
	f(T) + \frac{\gamma}{k} \cdot f(OPT \setminus T \mid T) \\
	&=
	\left(1 - \frac{\gamma}{k}\right) \cdot f(T) + \frac{\gamma}{k} \cdot f(OPT)
	\enspace,
\end{align*}
where the last inequality holds by the $\gamma$-weak submodularity of $f$. Taking expectation over the set $OPT_{i - 1}$, the last inequality becomes
\begin{align*}
	\bE[f(OPT_i)]
	\geq{} &
	\left(1 - \frac{\gamma}{k}\right) \bE[f(OPT_{i - 1})] + \frac{\gamma}{k} \cdot f(OPT)\\
	\geq{} &
	\left(1 - \frac{\gamma}{k}\right) \cdot \left[1 - \left(1 - \frac{\gamma}{k}\right)^{i - 1}\right] \cdot f(OPT) + \frac{\gamma}{k} \cdot f(OPT) \\
	={} &
	\left[1 - \left(1 - \frac{\gamma}{k}\right)^i\right] \cdot f(OPT)
	\enspace,
\end{align*}
where the second inequality follows from the induction hypothesis.
\end{proof}

Let us now denote by $o_1, o_2, \dotsc, o_k$ the $k$ elements of $OPT$ in the order in which they arrive, and, for every $1 \leq i \leq k$, let $S_i$ be the set $S$ of Algorithm~\ref{alg:streaming_with_tau} immediately before the algorithm receives $o_i$. Additionally, let $A_i$ be an event fixing the arrival time of $o_i$, the set of elements arriving before $o_i$ and the order in which they arrive. Note that conditioned on $A_i$, the sets $S_i$ and $OPT_{k - i + 1}$ are both deterministic.

\begin{lemma} \label{lem:expected_marginal}
For every $1 \leq i \leq k$ and event $A_i$, $\bE[f(o_i \mid S_i) \mid A_i] \geq (\gamma/k) \cdot [f(OPT_{k - i + 1}) - f(S_i)]$, where $OPT_{k - i + 1}$ and $S_i$ represent the deterministic values these sets take given $A_i$.
\end{lemma}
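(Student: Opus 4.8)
The plan is to exploit the fact that, once we condition on $A_i$, the set $S_i$ is already determined, so the only randomness left in $f(o_i \mid S_i)$ comes from the identity of $o_i$. First I would pin down the conditional law of $o_i$ given $A_i$. Since $A_i$ fixes the (ordered) block of elements arriving strictly before $o_i$, it in particular fixes which $i-1$ elements of $OPT$ precede $o_i$, hence it fixes $OPT_{k-i+1} = OPT \setminus \{o_1,\dots,o_{i-1}\}$ as a deterministic set of size $k-i+1$. Because the stream order is uniformly random, conditioned on this fixed prefix the remaining suffix is a uniformly random permutation of the unseen elements; restricting further to the event that the slot of $o_i$ holds an element of $OPT$, symmetry forces $o_i$ to be a uniformly random element of $OPT_{k-i+1}$. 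This yields
\[
\bE[f(o_i \mid S_i) \mid A_i] = \frac{1}{k-i+1}\sum_{u \in OPT_{k-i+1}} f(u \mid S_i)\enspace.
\]

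Next I would convert this average into the claimed bound using $\gamma$-weak submodularity. Two bookkeeping observations make this clean: $S_i$ consists only of elements seen before $o_i$ while every element of $OPT_{k-i+1}$ arrives at or after $o_i$, so $OPT_{k-i+1}\cap S_i = \varnothing$; and $|S_i|\le k$ (Algorithm~\ref{alg:streaming_with_tau} never keeps more than $k$ elements) together with $|OPT_{k-i+1}| = k-i+1 \le k$ means the pair $(L,S) = (S_i,\, S_i\cup OPT_{k-i+1})$ is admissible in the definition of $\gamma = \gamma_k$. Hence
\[
\sum_{u \in OPT_{k-i+1}} f(u \mid S_i) \ge \gamma\cdot f(OPT_{k-i+1}\mid S_i) = \gamma\,[f(S_i\cup OPT_{k-i+1}) - f(S_i)] \ge \gamma\,[f(OPT_{k-i+1}) - f(S_i)]\enspace,
\]
the last step by monotonicity. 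Finally, since $f$ is monotone each summand $f(u\mid S_i)$ is nonnegative, so the sum is nonnegative and the factor $\tfrac{1}{k-i+1}\ge\tfrac1k$ may be replaced by $\tfrac1k$, giving $\bE[f(o_i \mid S_i)\mid A_i] \ge \tfrac{\gamma}{k}\,[f(OPT_{k-i+1}) - f(S_i)]$.

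I expect the main obstacle to be the first step: carefully justifying that conditioning on $A_i$ leaves $o_i$ uniformly distributed over $OPT_{k-i+1}$. One must be precise about exactly what $A_i$ reveals (the arrival time of $o_i$, the set arriving before it, and that set's internal order — but nothing about the elements at or after $o_i$'s position beyond the defining fact that $o_i$ is an $OPT$ element) and then invoke exchangeability of the uniformly random stream order to argue that the element occupying $o_i$'s slot is equidistributed over the as-yet-unseen $OPT$ elements. Once that is in place, the remainder is a short chain of weak submodularity plus monotonicity, the only mild subtlety being that the inequality $k-i+1\le k$ is applied only after noting the relevant sum is nonnegative.
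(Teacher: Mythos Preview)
Your proposal is correct and matches the paper's proof essentially step for step: establish that, conditioned on $A_i$, the element $o_i$ is uniform over $OPT_{k-i+1}$, apply $\gamma$-weak submodularity and monotonicity to bound $\sum_{u\in OPT_{k-i+1}} f(u\mid S_i)$, and then relax the denominator $k-i+1$ to $k$. Your additional care in verifying the admissibility of the pair $(S_i,\,S_i\cup OPT_{k-i+1})$ in the definition of $\gamma_k$ and in noting nonnegativity of the sum before weakening $\tfrac{1}{k-i+1}$ to $\tfrac{1}{k}$ only makes explicit what the paper leaves implicit.
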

\begin{proof}
By the monotonicity and $\gamma$-weak submodularity of $f$, we get
\begin{align*}
	\sum_{u \in OPT_{k - i + 1}} f(u \mid S_i)
	\geq{} &
	\gamma \cdot f(OPT_{k - i + 1} \mid S_i)\\
	={} &
	\gamma \cdot [f(OPT_{k - i + 1} \cup S_i) - f(S_i)] \\
	\geq{} &
	\gamma \cdot [f(OPT_{k - i + 1}) - f(S_i)]
	\enspace.
\end{align*}
Since $o_i$ is a uniformly random element of $OPT_{k - i + 1}$, even conditioned on $A_i$, the last inequality implies
\begin{align*}
	\bE[f(o_i \mid S_i) \mid A_i]
	={} &
	\frac{\sum_{u \in OPT_{k - i + 1}} f(u \mid S_i)}{k - i + 1}\\
	\geq{} &
	\frac{\sum_{u \in OPT_{k - i + 1}} f(u \mid S_i)}{k} \\
	\geq{}&
	\frac{\gamma \cdot [f(OPT_{k - i + 1}) - f(S_i)]}{k}
	\enspace.
	\qedhere
\end{align*}
\end{proof}

Let $\Delta_i$ be the increase in the value of $S$ in the iteration of Algorithm~\ref{alg:streaming_with_tau} in which it gets $o_i$.

\begin{lemma} \label{lem:delta_event_A}
Fix $1 \leq i \leq k$ and event $A_i$, and let $OPT_{k - i + 1}$ and $S_i$ represent the deterministic values these sets take given $A_i$. If $f(S_i) < \tau$, then $\bE[\Delta_i \mid A_i] \geq [\gamma \cdot f(OPT_{k - i + 1}) - 2 \tau] / k$.
\end{lemma}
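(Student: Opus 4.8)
The plan is to combine Observation~\ref{obs:S_not_full} with Lemma~\ref{lem:expected_marginal}. First I would note that the hypothesis $f(S_i) < \tau$ forces $|S_i| < k$: by Observation~\ref{obs:S_not_full}, once the current set of Algorithm~\ref{alg:streaming_with_tau} reaches size $k$ its value is already at least $\tau$, so $f(S_i) < \tau$ rules this out. Hence, when $o_i$ arrives the cardinality guard in Algorithm~\ref{alg:streaming_with_tau} is inactive, and whether $o_i$ gets added to $S$ is decided solely by the threshold test $f(o_i \mid S_i) \geq \tau/k$. Note that we work conditioned on $A_i$, so $S_i$ (and hence $f(S_i)$ and $OPT_{k-i+1}$) is the fixed set it equals given $A_i$, while $o_i$ is still a uniformly random element of $OPT_{k-i+1}$.

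This yields a clean description of $\Delta_i$: conditioned on $A_i$, we have $\Delta_i = f(o_i \mid S_i)$ when $f(o_i \mid S_i) \geq \tau/k$, and $\Delta_i = 0$ otherwise. In both cases — using monotonicity of $f$, which gives $f(o_i \mid S_i) \geq 0$ in the second case — the pointwise inequality $\Delta_i \geq f(o_i \mid S_i) - \tau/k$ holds for every realization of $o_i$. Taking expectations over $o_i$ conditioned on $A_i$ and invoking Lemma~\ref{lem:expected_marginal} then gives
\[
\bE[\Delta_i \mid A_i] \geq \bE[f(o_i \mid S_i) \mid A_i] - \frac{\tau}{k} \geq \frac{\gamma}{k}\bigl(f(OPT_{k - i + 1}) - f(S_i)\bigr) - \frac{\tau}{k} \enspace.
\]

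It then remains to absorb the $-\gamma f(S_i)/k$ term into the $-2\tau/k$ appearing in the claimed bound. For this I would use $\gamma \leq 1$ (immediate from the definition of $\gamma_k$ by taking $S$ a singleton and $L = \varnothing$) together with the hypothesis $f(S_i) < \tau$, so that $\gamma f(S_i) \leq f(S_i) < \tau$; substituting into the display yields $\bE[\Delta_i \mid A_i] \geq \bigl(\gamma f(OPT_{k-i+1}) - 2\tau\bigr)/k$, as desired. I do not anticipate a real obstacle here; the only points needing care are the bookkeeping of what is deterministic versus random given $A_i$ (so that the pointwise bound on $\Delta_i$ may legitimately be integrated against the uniform law of $o_i$ on $OPT_{k-i+1}$) and remembering to use $\gamma \leq 1$, without which the final slack step breaks.
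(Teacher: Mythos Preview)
Your proposal is correct and follows essentially the same route as the paper: invoke Observation~\ref{obs:S_not_full} to ensure $|S_i|<k$, read off $\Delta_i \geq f(o_i\mid S_i)-\tau/k$ pointwise, then apply Lemma~\ref{lem:expected_marginal} and bound $\gamma f(S_i)\leq f(S_i)<\tau$ using $\gamma\leq 1$. The only cosmetic difference is that the paper first replaces $f(S_i)$ by $\tau$ and then drops the $\gamma$ on that term, whereas you do both in one step; your explicit justification that $\gamma\leq 1$ is a nice addition.
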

\begin{proof}
Notice that by Observation~\ref{obs:S_not_full} the fact that $f(S_i) < \tau$ implies that $S_i$ contains less than $k$ elements. Thus, conditioned on $A_i$, Algorithm~\ref{alg:streaming_with_tau} adds $o_i$ to $S$ whenever $f(o_i \mid S_i) \geq \tau / k$, which means that
\[
	\Delta_i
	=
	\begin{cases}
		f(o_i \mid S_i) & \text{if $f(o_i \mid S_i) \geq \tau /k$} \enspace,\\
		0 & \text{otherwise} \enspace.
	\end{cases}
\]
One implication of the last equality is
\[
	\bE[\Delta_i  \mid A_i]
	\geq
	\bE[f(o_i \mid S_i) \mid A_i] - \tau /k
	\enspace,
\]
which intuitively means that the contribution to $\bE[f(o_i \mid S_i) \mid A_i]$ of values of $f(o_i \mid S_i)$ which are too small to make the algorithm add $o_i$ to $S$ is at most $\tau /k$. The lemma now follows by observing that Lemma~\ref{lem:expected_marginal} and the fact that $f(S_i) < \tau$ guarantee
\begin{align*}
	\bE[f(o_i \mid S_i) \mid A_i]
	\geq{} &
	(\gamma / k) \cdot [f(OPT_{k - i + 1}) - f(S_i)]\\
	>{} &
	(\gamma / k) \cdot [f(OPT_{k - i + 1}) - \tau] \\
	\geq{} &
	[\gamma \cdot f(OPT_{k - i + 1}) - \tau]/k
	\enspace.
	\qedhere
\end{align*}
\end{proof}

We are now ready to put everything together and get a lower bound on $\bE[\Delta_i]$.

\begin{lemma} \label{lem:delta_bound}
For every $1 \leq i \leq k$,
\[
	\bE[\Delta_i]
	\geq
	\frac{\gamma \cdot [\Pr[\cE] - (1 - \gamma/k)^{k - i + 1}] \cdot f(OPT) - 2\tau }{k}
	\enspace.
\]
\end{lemma}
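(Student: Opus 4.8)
The plan is to obtain the bound by averaging, over all choices of the event $A_i$, the conditional estimate on $\bE[\Delta_i \mid A_i]$ supplied by Lemma~\ref{lem:delta_event_A}. The one point requiring care is that Lemma~\ref{lem:delta_event_A} only speaks about events $A_i$ for which $f(S_i) < \tau$, whereas the target statement is phrased in terms of the global event $\cE = \{f(S) < \tau\}$. The bridge is monotonicity: since $S_i \subseteq S$ we have $f(S_i) \le f(S)$, so $\cE$ implies $f(S_i) < \tau$; moreover, $\{f(S_i) < \tau\}$ is an $A_i$-measurable event, because given $A_i$ the set $S_i$ is deterministic. Writing $\cB$ for this event, we therefore have $\Pr[\cB] \ge \Pr[\cE]$.

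First I would use that $\Delta_i \ge 0$ always (adding an element never decreases $f(S)$), so that, plugging in Lemma~\ref{lem:delta_event_A} on $\cB$ with the deterministic values that $OPT_{k-i+1}$ and $S_i$ take given $A_i$,
\[
\bE[\Delta_i] \;\ge\; \bE[\Delta_i \,\mathbf{1}[\cB]] \;=\; \bE\bigl[\mathbf{1}[\cB]\,\bE[\Delta_i \mid A_i]\bigr] \;\ge\; \frac{\gamma\,\bE[f(OPT_{k-i+1})\,\mathbf{1}[\cB]] - 2\tau\,\Pr[\cB]}{k} \enspace.
\]
Since $\tau \ge 0$ and $\Pr[\cB] \le 1$, the term $-2\tau\,\Pr[\cB]$ is at least $-2\tau$, which already produces the $-2\tau$ appearing in the claimed numerator.

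It then remains to lower bound $\bE[f(OPT_{k-i+1})\,\mathbf{1}[\cB]]$. Using $0 \le f(OPT_{k-i+1}) \le f(OPT)$ (nonnegativity of $f$ and $OPT_{k-i+1}\subseteq OPT$), I would write $\bE[f(OPT_{k-i+1})\,\mathbf{1}[\cB]] = \bE[f(OPT_{k-i+1})] - \bE[f(OPT_{k-i+1})\,\mathbf{1}[\overline{\cB}]] \ge \bE[f(OPT_{k-i+1})] - f(OPT)\cdot\Pr[\overline{\cB}]$, then invoke Lemma~\ref{lem:sub_opt_value} (applied with index $k-i+1$) and substitute $\Pr[\overline{\cB}] = 1 - \Pr[\cB]$ to get $\bE[f(OPT_{k-i+1})\,\mathbf{1}[\cB]] \ge f(OPT)\cdot(\Pr[\cB] - (1-\gamma/k)^{k-i+1})$; finally $\Pr[\cB] \ge \Pr[\cE]$ (together with $f(OPT),\gamma \ge 0$) turns this into $f(OPT)\cdot(\Pr[\cE] - (1-\gamma/k)^{k-i+1})$. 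Substituting back into the displayed inequality yields exactly the asserted lower bound on $\bE[\Delta_i]$. I expect the only real obstacle to be this bookkeeping — relating the local condition $f(S_i) < \tau$ to the global event $\cE$ via monotonicity, and absorbing the randomness of $OPT_{k-i+1}$ through Lemma~\ref{lem:sub_opt_value}; once those two moves are in place everything else is elementary.
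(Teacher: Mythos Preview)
Your proposal is correct and follows essentially the same approach as the paper: the paper defines the same auxiliary event (called $\cE_i$ there, your $\cB$), uses $\Delta_i\ge 0$ and Lemma~\ref{lem:delta_event_A} to bound $\bE[\Delta_i]$ in terms of $\Pr[\cE_i]\cdot\bE[f(OPT_{k-i+1})\mid \cE_i]$, lower bounds that product exactly as you do via Lemma~\ref{lem:sub_opt_value} and $f(OPT_{k-i+1})\le f(OPT)$, and finally replaces $\Pr[\cE_i]$ by $\Pr[\cE]$ using the monotonicity implication $\cE\Rightarrow \cE_i$. The only cosmetic difference is that you phrase the averaging with indicator functions and the tower property rather than conditional expectations.
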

\begin{proof}
Let $\cE_i$ be the event that $f(S_i) < \tau$. Clearly $\cE_i$ is the disjoint union of the events $A_i$ which imply $f(S_i) < \tau$, and thus, by Lemma~\ref{lem:delta_event_A},
\[
	\bE[\Delta_i \mid \cE_i]
	\geq
	[\gamma \cdot \bE[f(OPT_{k - i + 1}) \mid \cE_i] - 2\tau] / k
	\enspace.
\]

Note that $\Delta_i$ is always nonnegative due to the monotonicity of $f$. Thus,
\begin{align*}
	\bE[\Delta_i]
	={} &
	\Pr[\cE_i] \cdot \bE[\Delta_i \mid \cE_i] + \Pr[\bar{\cE}_i] \cdot \bE[\Delta_i \mid \bar{\cE}_i]
	\geq
	\Pr[\cE_i] \cdot \bE[\Delta_i \mid \cE_i] \\
	\geq{} &
	[\gamma \cdot \Pr[\cE_i] \cdot \bE[f(OPT_{k - i + 1}) \mid \cE_i] - 2\tau] / k
	\enspace.
\end{align*}

It now remains to lower bound the expression $\Pr[\cE_i] \cdot \bE[f(OPT_{k - i + 1}) \mid \cE_i]$ on the rightmost hand side of the last inequality.
\begin{align*}
	 \Pr[\cE_i]  \cdot \bE[f(OPT_{k - i + 1}) \mid \cE_i]
	={} &
	\bE[f(OPT_{k - i + 1})] - \Pr[\bar{\cE}_i] \cdot \bE[f(OPT_{k - i + 1}) \mid \bar{\cE}_i]\\
	\geq{} &
	[1 - (1 - \gamma/k)^{k - i + 1}  - (1 - \Pr[\cE_i]) ] \cdot f(OPT)\\
	\geq{} &
	[\Pr[\cE] - (1 - \gamma/k)^{k - i + 1}] \cdot f(OPT)
\end{align*}
where the first inequality follows from Lemma~\ref{lem:sub_opt_value} and the monotonicity of $f$, and the second inequality holds since $\cE$ implies $\cE_i$ which means that $\Pr[\cE_i] \geq \Pr[\cE]$ for every $1 \leq i \leq k$. 
\end{proof}

Proposition~\ref{prop:involved_bound} follows quite easily from the last lemma.

\begin{proof}[Proof of Proposition~\ref{prop:involved_bound}]
Lemma~\ref{lem:delta_bound} implies, for every $1 \leq i \leq \lceil k/2 \rceil$, 
\begin{align*}
	\bE[\Delta_i]
	\geq{} &
	\frac{\gamma}{k}f(OPT) [\Pr[\cE] - (1 - \gamma/k)^{k - \lceil k / 2 \rceil + 1}]   - \frac{2\tau}{k}\\
	\geq{} &
	\frac{\gamma}{k}f(OPT)  [\Pr[\cE] - (1 - \gamma/k)^{k/2}]  - \frac{2\tau}{k} \\
	\geq{} &
	\left(\gamma \cdot [\Pr[\cE] - e^{-\gamma/2}] \cdot f(OPT) - 2\tau \right) / k
	\enspace.
\end{align*}

The definition of $\Delta_i$ and the monotonicity of $f$ imply together
\[
	\bE[f(S)]
	\geq
	\sum_{i = 1}^b \bE[\Delta_i]
\]
for every integer $1 \leq b \leq k$. In particular, for $b = \lceil k/2 \rceil$, we get
\begin{align*}
	\bE[f(S)]
&	\geq 
	\frac{b}{k} \cdot \left(\gamma \cdot [\Pr[\cE] - e^{-\gamma/2}] \cdot f(OPT) - 2\tau \right)  \\
&	\geq 
	\frac{1}{2} \cdot \left(\gamma \cdot [\Pr[\cE] - e^{-\gamma/2}] \cdot f(OPT) - 2\tau \right)
	\enspace.
	\qedhere
\end{align*}
\end{proof}

\subsection{Proof of Theorem~\texorpdfstring{\ref{thm:aided_result}}{\ref*{thm:aided_result}}}

In this section we combine the previous results to prove Theorem~\ref{thm:aided_result}. Recall that
Observation~\ref{obs:simple_bound} and Proposition~\ref{prop:involved_bound} give two lower bounds on $\bE[f(S)]$ that depend on $\Pr[\cE]$. The following lemmata use these lower bounds to derive another lower bound on this quantity which is independent of $\Pr[\cE]$. For ease of the reading, we use in this section the shorthand $\gamma' = e^{-\gamma/2}$. 

\begin{lemma}\label{lem:lowerbound}
$\bE[f(S)]~\geq~\frac{\tau}{2a}(3 - \gamma' - 2\sqrt{2 - \gamma'}) 
= \frac{\tau}{a} \cdot \frac{3 - e^{-\gamma/2} - 2\sqrt{2 - e^{-\gamma/2}}}{2}
$ whenever $\Pr[\cE] \geq 2 - \sqrt{2 - \gamma'}$.
\end{lemma}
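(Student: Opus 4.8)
The plan is to obtain this bound by feeding the hypothesis on $\Pr[\cE]$ and the range constraint $\tau \le a\gamma\cdot f(OPT)$ into Proposition~\ref{prop:involved_bound}. Throughout I assume $f(OPT) > 0$ (otherwise the statement is trivial) and $\gamma > 0$ (so that $a = (\sqrt{2-\gamma'}-1)/2 > 0$, where $\gamma' = e^{-\gamma/2}$ as in the section's shorthand).

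First I would note that the lower bound supplied by Proposition~\ref{prop:involved_bound}, viewed as a function of the quantity $\Pr[\cE]$, is nondecreasing, since its $\Pr[\cE]$-coefficient is $\tfrac12\gamma\cdot f(OPT) \ge 0$. Hence the hypothesis $\Pr[\cE] \ge 2 - \sqrt{2 - \gamma'}$ lets me replace $\Pr[\cE]$ by $2 - \sqrt{2-\gamma'}$ and conclude
\[
\bE[f(S)] \;\ge\; \tfrac12\bigl(\gamma\,[\,2 - \sqrt{2-\gamma'} - \gamma'\,]\cdot f(OPT) \;-\; 2\tau\bigr).
\]
Next I would check that $2 - \sqrt{2-\gamma'} - \gamma' \ge 0$: because $\gamma' = e^{-\gamma/2} \in (0,1]$ we have $2 - \gamma' \ge 1$, hence $\sqrt{2-\gamma'} \le 2-\gamma'$, which rearranges to the claim. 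Therefore the first term above is a nonnegative multiple of $\gamma\cdot f(OPT)$, and I may apply $\gamma\cdot f(OPT) \ge \tau/a$ (the restated form of $\tau \le a\gamma\cdot f(OPT)$) to get
\[
\bE[f(S)] \;\ge\; \tfrac{\tau}{2a}\bigl(2 - \sqrt{2-\gamma'} - \gamma' - 2a\bigr).
\]

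Finally, plugging in $2a = \sqrt{2-\gamma'} - 1$ collapses the parenthesis to $3 - \gamma' - 2\sqrt{2-\gamma'}$, giving the claimed inequality; unfolding $\gamma' = e^{-\gamma/2}$ recovers the second displayed form. I do not anticipate a real obstacle here — the argument is monotonicity in $\Pr[\cE]$, plus the substitution for $\tau$, plus elementary algebra. The only points needing attention are that the substitution of the extreme value of $\Pr[\cE]$ goes in the correct direction (handled by the sign of the $\Pr[\cE]$-coefficient) and that replacing $\gamma\cdot f(OPT)$ by the lower bound $\tau/a$ is valid (handled by $2 - \sqrt{2-\gamma'} - \gamma' \ge 0$), both of which ultimately follow from $\gamma' \le 1$.
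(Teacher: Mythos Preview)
Your proof is correct and follows essentially the same approach as the paper: apply Proposition~\ref{prop:involved_bound}, substitute the extreme value of $\Pr[\cE]$, then use $\gamma\cdot f(OPT)\ge \tau/a$ and the identity $2a=\sqrt{2-\gamma'}-1$ to simplify. You are slightly more explicit than the paper in justifying the two sign conditions (monotonicity in $\Pr[\cE]$ and the nonnegativity of $2-\sqrt{2-\gamma'}-\gamma'$), but the argument is the same.
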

\begin{proof}
By the lower bound given by Proposition~\ref{prop:involved_bound},
\begin{align*}
	\bE[f(S)]
	\geq{} &
	\frac{1}{2} \cdot \left\{\gamma \cdot[\Pr[\cE] - \gamma'] \cdot f(OPT) - 2\tau \right\} \\
	\geq {} &
	\frac{1}{2} \cdot \left\{\gamma \cdot \left[2 - \sqrt{2 - \gamma'} - \gamma' \right] \cdot f(OPT) - 2\tau \right\}\\
	={} &
	\frac{1}{2} \cdot \left\{\gamma \cdot \left[2 - \sqrt{2 - \gamma'} - \gamma' \right] \cdot f(OPT) -  (\sqrt{2 - \gamma'} -1 )\cdot \frac{\tau}{a} \right\} \\
	\geq {} &
	\frac{\tau}{2a} \cdot \left\{2 - \sqrt{2 - \gamma'} - \gamma' - \sqrt{2 - \gamma'} + 1 \right\}\\
	={} &
	\frac{\tau}{a} \cdot \frac{3 - \gamma' - 2\sqrt{2-\gamma'}}{2}
	\enspace,
\end{align*}
where the first equality holds since 
$a = (\sqrt{2 - \gamma'} - 1)/2$,
and the last inequality holds since $a\gamma \cdot f(OPT) \geq \tau$.
\end{proof}

\begin{lemma}\label{lem:upperbound}
$\bE[f(S)] \geq \frac{\tau}{2a}(3 - \gamma' - 2\sqrt{2 - \gamma'}) 
= \frac{\tau}{a} \cdot \frac{3 - e^{-\gamma/2} - 2\sqrt{2 - e^{-\gamma/2}}}{2}
$ whenever $\Pr[\cE] \leq 2 - \sqrt{2 - \gamma'}$.
\end{lemma}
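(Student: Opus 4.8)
The plan is to mirror the argument of Lemma~\ref{lem:lowerbound}, but this time draw on the elementary bound of Observation~\ref{obs:simple_bound} instead of Proposition~\ref{prop:involved_bound}, since in the regime $\Pr[\cE] \leq 2 - \sqrt{2-\gamma'}$ the former is the stronger of the two. Concretely, Observation~\ref{obs:simple_bound} gives $\bE[f(S)] \geq (1 - \Pr[\cE]) \cdot \tau$, and this expression is decreasing in $\Pr[\cE]$, so under the hypothesis $\Pr[\cE] \leq 2 - \sqrt{2-\gamma'}$ we immediately obtain
\[
\bE[f(S)] \geq \bigl(1 - (2 - \sqrt{2-\gamma'})\bigr) \cdot \tau = (\sqrt{2-\gamma'} - 1) \cdot \tau \enspace.
\]
The remaining step is purely algebraic: I would verify that $(\sqrt{2-\gamma'}-1)\cdot\tau$ equals the claimed quantity $\frac{\tau}{2a}(3 - \gamma' - 2\sqrt{2-\gamma'})$. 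Using $a = (\sqrt{2-\gamma'}-1)/2$, one computes $\frac{\tau}{2a} = \frac{\tau}{\sqrt{2-\gamma'}-1}$, so the target is $\frac{\tau(3-\gamma'-2\sqrt{2-\gamma'})}{\sqrt{2-\gamma'}-1}$; and indeed $(\sqrt{2-\gamma'}-1)^2 = (2-\gamma') - 2\sqrt{2-\gamma'} + 1 = 3 - \gamma' - 2\sqrt{2-\gamma'}$, so the two expressions coincide. Note this identity is exactly the same one invoked at the end of the proof of Theorem~\ref{thm:aided_result}, where it is observed that $\frac{\tau}{a}\cdot\frac{3-e^{-\gamma/2}-2\sqrt{2-e^{-\gamma/2}}}{2} = \tau(\sqrt{2-e^{-\gamma/2}}-1)$.

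One small point worth stating carefully: the hypothesis gives $\Pr[\cE] \leq 2 - \sqrt{2-\gamma'}$, and since $\gamma' = e^{-\gamma/2} \in (0,1]$ we have $\sqrt{2-\gamma'} \in [1, \sqrt{2})$, so the threshold $2 - \sqrt{2-\gamma'}$ lies in $(2-\sqrt 2,\, 1]$ and in particular $1 - \Pr[\cE] \geq \sqrt{2-\gamma'} - 1 \geq 0$; thus the bound is nontrivial (nonnegative) and the monotonicity argument is legitimate. The proof is therefore just: invoke Observation~\ref{obs:simple_bound}, substitute the hypothesized upper bound on $\Pr[\cE]$, and simplify via the square-root identity above.

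There is no real obstacle here — this lemma is the ``easy half'' of the case analysis completing Theorem~\ref{thm:aided_result}, and all the work has already been done in Observation~\ref{obs:simple_bound}. The only thing to be careful about is getting the algebraic identity $(\sqrt{2-\gamma'}-1)^2 = 3 - \gamma' - 2\sqrt{2-\gamma'}$ right and presenting the chain of (in)equalities cleanly; I would write it as a short display analogous to the one in Lemma~\ref{lem:lowerbound}.

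\begin{proof}
By Observation~\ref{obs:simple_bound} and the hypothesis $\Pr[\cE] \leq 2 - \sqrt{2 - \gamma'}$,
\begin{align*}
	\bE[f(S)]
	\geq{} &
	(1 - \Pr[\cE]) \cdot \tau
	\geq
	\bigl(1 - (2 - \sqrt{2 - \gamma'})\bigr) \cdot \tau \\
	={} &
	(\sqrt{2 - \gamma'} - 1) \cdot \tau
	=
	\frac{(\sqrt{2 - \gamma'} - 1)^2}{\sqrt{2 - \gamma'} - 1} \cdot \tau \\
	={} &
	\frac{3 - \gamma' - 2\sqrt{2 - \gamma'}}{\sqrt{2 - \gamma'} - 1} \cdot \tau
	=
	\frac{\tau}{2a} \cdot (3 - \gamma' - 2\sqrt{2 - \gamma'})
	\enspace,
\end{align*}
where the last equality uses $a = (\sqrt{2 - \gamma'} - 1)/2$, and the third equality uses the identity $(\sqrt{2 - \gamma'} - 1)^2 = 3 - \gamma' - 2\sqrt{2 - \gamma'}$. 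Recalling that $\gamma' = e^{-\gamma/2}$ completes the proof.
\end{proof}
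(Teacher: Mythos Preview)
Your proposal is correct and takes essentially the same approach as the paper: invoke Observation~\ref{obs:simple_bound}, substitute the hypothesis $\Pr[\cE] \leq 2 - \sqrt{2-\gamma'}$, and rewrite $(\sqrt{2-\gamma'}-1)\tau$ via the identity $(\sqrt{2-\gamma'}-1)^2 = 3 - \gamma' - 2\sqrt{2-\gamma'}$ together with $a = (\sqrt{2-\gamma'}-1)/2$. The paper's proof is virtually identical, differing only in how the algebraic rewriting is displayed.
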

\begin{proof}
By the lower bound given by Observation~\ref{obs:simple_bound},
\begin{align*}
	\bE[f(S)]
	&\geq
	(1 - \Pr[\cE]) \cdot \tau
	\geq
	\left(1 - 2 + \sqrt{2 - \gamma'} \right) \cdot \tau \\
	& = 
	\left(\sqrt{2 - \gamma'} - 1\right) \cdot \frac{\sqrt{2 - \gamma'} - 1}{2} \cdot \frac{\tau}{a}
	=
	\frac{3 - \gamma' - 2\sqrt{2 - \gamma'}}{2} \cdot \frac{\tau}{a}
	\enspace.
	\qedhere
\end{align*}
\end{proof}
Combining Lemmata~\ref{lem:lowerbound} and~\ref{lem:upperbound} we get the theorem. \qed

\subsection{Proof of Theorem~\texorpdfstring{\ref{thm:streakRatio}}{\ref*{thm:streakRatio}}}
There are two cases to consider. 
If $\gamma < \nicefrac{4}{3} \cdot k^{-1}$, 
then we use the following simple observation.

\begin{observation} \label{obs:m_lower_bound}
The final value of the variable $m$ is $f^{\max} \triangleq \max\{f(u) \mid u \in \cN\}  \geq \frac{\gamma}{k} \cdot f(OPT)$.
\end{observation}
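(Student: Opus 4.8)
The plan is to separate the two assertions in the observation. The first — that the final value of $m$ equals $f^{\max}$ — is immediate from the code of $\algOurs$: the variable $m$ is initialized to $0$ (which is $\le f(u)$ for every element $u$ by nonnegativity) and is then overwritten with $f(u)$ exactly whenever an arriving element satisfies $f(u) \ge m$. Hence $m$ is nondecreasing throughout the execution, and once the entire stream has been processed it equals $\max_{u \in \cN} f(u) = f^{\max}$.

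The substantive part is the inequality $f^{\max} \ge \frac{\gamma}{k}\cdot f(OPT)$, and this is where weak submodularity enters. First I would use the monotonicity of $f$ to assume $|OPT| = k$ and write $OPT = \{o_1,\dots,o_k\}$. Then I would apply the definition of $\gamma$-weak submodularity — recall that $\gamma \le \gamma_k$ — with the pair $L = \varnothing$ and $S = OPT$; this choice is admissible because $|L| = 0 \le k$ and $|S \setminus L| = |OPT| = k$. The definition then yields $\sum_{j \in OPT} f(j \mid \varnothing) \ge \gamma \cdot f(OPT \mid \varnothing)$, i.e. $\sum_{j\in OPT} f(j) \ge \gamma\cdot f(OPT)$ (in the nonnormalized case one absorbs the $f(\varnothing)$ terms using $f(\varnothing)\ge 0$ together with $\gamma \le 1 \le k$, which only helps the inequality).

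To finish, I would use an averaging argument: the left-hand sum has exactly $k$ terms, so there is some $o^* \in OPT$ with $f(o^*) \ge \tfrac1k\sum_{j \in OPT} f(j) \ge \tfrac{\gamma}{k}\cdot f(OPT)$. Since $f^{\max}$ is the largest singleton value over all of $\cN$, we conclude $f^{\max} \ge f(\{o^*\}) \ge \tfrac{\gamma}{k}\cdot f(OPT)$, which is precisely the claim.

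I do not anticipate a genuine obstacle here; this is a short preparatory observation rather than a theorem. The only points requiring a little care are choosing an admissible pair $(L,S)$ when invoking the weak-submodularity definition (taking $L=\varnothing$ is the cleanest), and the minor bookkeeping with $f(\varnothing)$ if $f$ is not assumed normalized — which is harmless because nonnegativity makes the stray terms work in our favour.
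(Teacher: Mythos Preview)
Your proposal is correct and follows essentially the same route as the paper: both apply the weak-submodularity inequality with $L=\varnothing$ and $S=OPT$, combine it with a max-versus-average step, and absorb the $f(\varnothing)$ term using nonnegativity and $\gamma\le 1\le k$. The only cosmetic difference is the order of the two steps (the paper bounds the max by the average first and then invokes weak submodularity, whereas you do the reverse), which has no effect on the argument.
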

\begin{proof}
The way $m$ is updated by Algorithm~\ref{alg:streaming_without_tau} guarantees that its final value is $f^{\max}$. To see why the other part of the observation is also true, note that the $\gamma$-weak submodularity of $f$ implies
\begin{align*}
	f^{\max} 
	\geq{}&
	\max\{f(u) \mid u \in OPT\}
	=
	f(\varnothing) + \max\{f(u \mid \varnothing) \mid u \in OPT\}\\
	\geq{} &
	f(\varnothing) + \frac{1}{k} \sum_{u \in OPT} f(u \mid \varnothing)
	\geq
	f(\varnothing) + \frac{\gamma}{k} f(OPT \mid \varnothing) 
	\geq{} 
	\frac{\gamma}{k} \cdot f(OPT)
	\enspace.
	\qedhere
\end{align*}
\end{proof}

By Observation~\ref{obs:m_lower_bound}, the value of the solution produced by $\algOurs$ is at least
{\allowdisplaybreaks
	\begin{align*}
		f(u_m)
		={} &
		m
		\geq
		\frac{\gamma}{k} \cdot f(OPT)
		\geq
		\frac{3\gamma^2}{4} \cdot f(OPT)\\
		\geq{} &
		(1 - \eps)\gamma \cdot \frac{3(\gamma/2)}{2} \cdot f(OPT) \\
		\geq{} &
		(1 - \eps)\gamma \cdot \frac{3 - 3e^{-\gamma/2}}{2} \cdot f(OPT) \\
		\geq{} &
		(1 - \eps)\gamma \cdot \frac{3 - e^{-\gamma/2} - 2\sqrt{2 - e^{-\gamma/2}}}{2} \cdot f(OPT)
		\enspace,
	\end{align*}
}
where the second to last inequality holds since $1 - \nicefrac{\gamma}{2} \leq e^{-\nicefrac{\gamma}{2}}$, and the last inequality holds since $e^{-\gamma} + e^{-\gamma/2} \leq 2$.

It remains to consider the case $\gamma \geq \nicefrac{4}{3} \cdot k^{-1}$, which has a somewhat more involved proof.
Observe that the approximation ratio of $\algOurs$ is $1$ whenever $f(OPT) = 0$ because the value of any set, including the output set of the algorithm, is nonnegative. Thus, we can safely assume in the rest of the analysis of the approximation ratio of Algorithm~\ref{alg:streaming_without_tau} that $f(OPT) > 0$.

Let $\tau^*$ be the maximal value in the set $\{(1 - \eps)^i \mid i \in \bZ\}$ which is not larger than $a\gamma \cdot f(OPT)$. Note that $\tau^*$ exists by our assumption that $f(OPT) > 0$. Moreover, we also have $(1 - \eps) \cdot a\gamma \cdot f(OPT) < \tau^* \leq a\gamma \cdot f(OPT)$. The following lemma gives an interesting property of $\tau^*$. To understand the lemma, it is important to note that the set of values for $\tau$ in the instances of Algorithm~\ref{alg:streaming_with_tau} appearing in the final collection $I$ is deterministic because the final value of $m$ is always 
$f^{\max}$.

\begin{lemma} \label{lem:good_tau_approximation}
If there is an instance of Algorithm~\ref{alg:streaming_with_tau} with $\tau = \tau^*$ in $I$ when $\algOurs$ terminates, then in expectation $\algOurs$ has an approximation ratio of at least
\[
	(1 - \eps)\gamma \cdot \frac{3 - e^{-\gamma/2} - 2\sqrt{2 - e^{-\gamma/2}}}{2}
	\enspace.
\]
\end{lemma}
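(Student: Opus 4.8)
The plan is to locate, inside the final collection $I$, the single instance $\cA^{*}$ of Algorithm~\ref{alg:streaming_with_tau} whose parameter equals $\tau^{*}$, and to show that $\cA^{*}$ produces exactly the output it would have produced had it processed the \emph{entire} stream from the start; Theorem~\ref{thm:aided_result} then applies to it verbatim. The one thing standing between us and a direct appeal to Theorem~\ref{thm:aided_result} is that, because instances are born and die as the running maximum $m$ increases, $\cA^{*}$ is in general created only partway through the stream and thus skips a prefix of the elements — we must check this prefix is harmless.

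First I would pin down the lifetime of $\cA^{*}$. An instance with parameter $\tau^{*}$ belongs to $I$ exactly when $(1-\eps)m/(9k^{2})\le \tau^{*}\le mk$, i.e.\ when $m\in[\tau^{*}/k,\;9k^{2}\tau^{*}/(1-\eps)]$, a single interval. Since $m$ is non-decreasing and, by the remark preceding the lemma, its final value is the deterministic quantity $f^{\max}$, the hypothesis $\cA^{*}\in I$ at termination forces $f^{\max}$ into this interval; hence $m$ never exceeds its right endpoint, so $\cA^{*}$ is never discarded once created, and it is created at the first step $t_{0}$ at which $m$ reaches $\tau^{*}/k$. (In particular the hypothesis holds for every realization of the random order, as whether $f^{\max}$ lies in the interval is order-independent.) Now any element $u$ arriving before step $t_{0}$ satisfies $f(u)\le m<\tau^{*}/k$, where $m$ denotes its value just after $u$ is seen; consequently a fresh copy of Algorithm~\ref{alg:streaming_with_tau} run with $\tau=\tau^{*}$ on the whole stream, whose working set is still $\varnothing$ when it examines such a $u$, rejects it since $f(u\mid\varnothing)=f(u)<\tau^{*}/k$. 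Thus this fresh copy still has $S=\varnothing$ when it reaches $u_{t_{0}}$, from which point its execution coincides step for step with that of $\cA^{*}$, so the two return the same set.

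Finally I would assemble the bound. By definition $\tau^{*}$ is the largest value $(1-\eps)^{i}$, $i\in\bZ$, not exceeding $a\gamma\cdot f(OPT)$, so $0<\tau^{*}\le a\gamma\cdot f(OPT)$ and Theorem~\ref{thm:aided_result} applies to the fresh copy, hence to $\cA^{*}$, giving it expected output value at least $\tau^{*}\cdot(\sqrt{2-e^{-\gamma/2}}-1)$. Since $\algOurs$ outputs the best among all the sets returned by the instances in $I$ and the singleton $\{u_{m}\}$, its expected output value is at least as large. Combining this with $\tau^{*}>(1-\eps)\cdot a\gamma\cdot f(OPT)$, the value $a=(\sqrt{2-e^{-\gamma/2}}-1)/2$, and the identity $(\sqrt{2-e^{-\gamma/2}}-1)^{2}=3-e^{-\gamma/2}-2\sqrt{2-e^{-\gamma/2}}$ yields
\[
	\bE[f(\text{output of }\algOurs)]
	>
	(1-\eps)\,\gamma\cdot\frac{3-e^{-\gamma/2}-2\sqrt{2-e^{-\gamma/2}}}{2}\cdot f(OPT),
\]
which is the claimed approximation ratio. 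I expect the second paragraph to be the main obstacle: the key realization is that the delayed creation of $\cA^{*}$ costs nothing, because the elements it misses are precisely those that a from-scratch run would have discarded anyway; the rest is bookkeeping about the admissible window for $\tau^{*}$ and a one-line algebraic identity.
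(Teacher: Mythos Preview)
Your proposal is correct and follows essentially the same route as the paper's proof: argue that the instance with $\tau=\tau^{*}$, once created, behaves as if it had seen the whole stream because every skipped element $u$ has singleton value below $\tau^{*}/k$ and would have been rejected anyway; then invoke Theorem~\ref{thm:aided_result} and use $\tau^{*}>(1-\eps)a\gamma f(OPT)$. The paper is terser (it does not spell out the ``never discarded'' half of the lifetime argument you give), but the logic is the same. One small slip: you write $f(u\mid\varnothing)=f(u)$, whereas strictly $f(u\mid\varnothing)=f(\{u\})-f(\varnothing)\le f(\{u\})$ since $f(\varnothing)\ge 0$; the inequality you need still holds, so the argument goes through.
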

\begin{proof}
Consider a value of $\tau$ for which there is an instance of Algorithm~\ref{alg:streaming_with_tau} in $I$ when Algorithm~\ref{alg:streaming_without_tau} terminates, and consider the moment that Algorithm~\ref{alg:streaming_without_tau} created this instance. Since the instance was not created earlier, we get that $m$ was smaller than $\tau/k$ before this point. In other words, the marginal contribution of every element that appeared before this point to the empty set was less than $\tau /k$. Thus, even if the instance had been created earlier it would not have taken any previous elements.

An important corollary of the above observation is that the output of every instance of Algorithm~\ref{alg:streaming_with_tau} that appears in $I$ when $\algOurs$ terminates is equal to the output it would have had if it had been executed on the entire input stream from its beginning (rather than just from the point in which it was created). Since we assume that there is an instance of Algorithm~\ref{alg:streaming_with_tau} with $\tau = \tau^*$ in the final collection $I$, we get by Theorem~\ref{thm:aided_result} that the expected value of the output of this instance is at least
\[
	\frac{\tau^*}{a} \cdot \frac{3 - e^{-\gamma/2} - 2\sqrt{2 - e^{-\gamma/2}}}{2}
	>
	(1 - \eps)\gamma \cdot f(OPT) \cdot \frac{3 - e^{-\gamma/2} - 2\sqrt{2 - e^{-\gamma/2}}}{2}
	\enspace.
\]
The lemma now follows since the output of $\algOurs$ is always at least as good as the output of each one of the instances of Algorithm~\ref{alg:streaming_with_tau} in its collection $I$.
\end{proof}

We complement the last lemma with the next one.

\begin{lemma} \label{lem:good_tau_exists}
If $\gamma \geq \nicefrac{4}{3} \cdot k^{-1}$, then there is an instance of Algorithm~\ref{alg:streaming_with_tau} with $\tau = \tau^*$ in $I$ when $\algOurs$ terminates.
\end{lemma}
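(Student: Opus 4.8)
The goal is to show $\tau^*$ lies inside the window $[(1-\eps)m/(9k^2),\, mk]$ that determines which instances of Algorithm~\ref{alg:streaming_with_tau} survive in the final collection $I$, where here $m = f^{\max}$ is the final value of the variable $m$. Since $\tau^*$ is by construction a power of $(1-\eps)$, membership in this interval is exactly the condition for an instance with $\tau = \tau^*$ to be present when $\algOurs$ terminates. So the entire lemma reduces to proving the two inequalities $(1-\eps) f^{\max}/(9k^2) \le \tau^* \le f^{\max} k$.

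First I would set up the elementary bounds relating $f^{\max}$, $f(OPT)$, and $\tau^*$. On one side, monotonicity and subadditivity-type reasoning (really just $f(OPT) \le \sum_{u \in OPT} f(u) \le k f^{\max}$, using monotonicity and $\gamma$-weak submodularity as in Observation~\ref{obs:m_lower_bound}) give $f(OPT) \le k \cdot f^{\max}$; combined with $\tau^* \le a\gamma \cdot f(OPT)$ and the crude bounds $a \le 1/2$ (since $a = (\sqrt{2-e^{-\gamma/2}}-1)/2$ and $\sqrt{2-e^{-\gamma/2}} \le \sqrt 2 < 2$) and $\gamma \le 1$, this yields $\tau^* \le a\gamma k f^{\max} \le f^{\max} k$, the upper inequality. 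On the other side, Observation~\ref{obs:m_lower_bound} gives $f^{\max} \ge (\gamma/k) f(OPT)$, i.e. $f(OPT) \le (k/\gamma) f^{\max}$; but for the lower bound on $\tau^*$ I instead want a lower bound on $f(OPT)$ in terms of $f^{\max}$, which is immediate from monotonicity: $f(OPT) \ge f^{\max}$ (since $OPT$ is a size-$k$ maximizer and any singleton is a candidate, or just because $f(OPT)\ge f(\{u\})$ for the maximizing singleton). Then $\tau^* > (1-\eps) a \gamma \cdot f(OPT) \ge (1-\eps) a\gamma \cdot f^{\max}$, and I need $a\gamma \ge 1/(9k^2)$.

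The main obstacle is precisely this last step: lower-bounding $a\gamma$ by $1/(9k^2)$ under the hypothesis $\gamma \ge \tfrac{4}{3} k^{-1}$. This is where the constant $9k^2$ in the algorithm's window was chosen, so the calculation should close, but it requires a lower bound on $a = (\sqrt{2-e^{-\gamma/2}}-1)/2$ that is linear in $\gamma$ for small $\gamma$. Concretely, a Taylor/convexity estimate shows $\sqrt{2 - e^{-\gamma/2}} - 1 \ge c\gamma$ for some absolute constant $c$ and all $\gamma \in (0,1]$ (e.g. $e^{-\gamma/2} \le 1 - \gamma/2 + \gamma^2/8$, so $2 - e^{-\gamma/2} \ge 1 + \gamma/2 - \gamma^2/8 \ge 1 + \gamma/4$ for $\gamma \le 1$, hence $\sqrt{2-e^{-\gamma/2}} \ge \sqrt{1+\gamma/4} \ge 1 + \gamma/12$ using $\sqrt{1+x} \ge 1 + x/3$ for $x \in [0,1/4]$); thus $a \ge \gamma/24$. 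Then $a\gamma \ge \gamma^2/24 \ge (4/(3k))^2/24 = \tfrac{16}{9 \cdot 24\, k^2} = \tfrac{2}{27 k^2} \ge \tfrac{1}{9k^2}$ — wait, $\tfrac{2}{27} < \tfrac{1}{9} = \tfrac{3}{27}$, so I would instead use a sharper constant on $a$ (the genuine expansion gives $\sqrt{2-e^{-\gamma/2}} - 1 \sim \gamma/4$ as $\gamma \to 0$, so $a \sim \gamma/8$ and $a\gamma \gtrsim \gamma^2/8 \ge (16/9)/(8k^2) = 2/(9k^2) \ge 1/(9k^2)$). Getting a clean, rigorous linear lower bound on $a$ with a good enough constant is the one place real care is needed; everything else is bookkeeping with monotonicity and the definition of $\tau^*$. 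Finally I would combine the two inequalities to conclude that an instance with $\tau = \tau^*$ is in $I$ at termination, which is the assertion of the lemma.
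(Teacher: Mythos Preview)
Your plan follows the same two-inequality route as the paper's proof, but there are two places to tighten.

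First, the subadditivity claim $f(OPT) \le \sum_{u\in OPT} f(u)$ is not valid for $\gamma$-weakly submodular functions in general: taking $L=\varnothing$ in the weak-submodularity definition only yields $\sum_{u\in OPT} f(u\mid\varnothing) \ge \gamma\, f(OPT\mid\varnothing)$, which is off by a factor of $\gamma$. Fortunately you don't need subadditivity. Observation~\ref{obs:m_lower_bound} already gives $\gamma\, f(OPT) \le k\, f^{\max}$, so $\tau^* \le a\gamma\, f(OPT) \le a\, k\, f^{\max} \le f^{\max} k$ since $a<1$. This is exactly how the paper handles the upper inequality.

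Second, and more substantively, your first bound $a\ge \gamma/24$ indeed falls short, as you noticed. The paper closes the gap with a completing-the-square step: from $2-e^{-\gamma/2} \ge 1 + \gamma/2 - \gamma^2/8$ it observes that for $\gamma\le 1$ one has
\[
1+\tfrac{\gamma}{2}-\tfrac{\gamma^2}{8} \;\ge\; 1+\tfrac{\gamma}{4}+\tfrac{\gamma^2}{64} \;=\; \Bigl(1+\tfrac{\gamma}{8}\Bigr)^{2},
\]
hence $\sqrt{2-e^{-\gamma/2}} \ge 1+\gamma/8$ and $a\ge \gamma/16$. Then $a\gamma \ge \gamma^2/16 \ge \bigl(\tfrac{4}{3k}\bigr)^{2}/16 = 1/(9k^2)$, exactly matching the constant in the algorithm's window. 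This is the sharp linear lower bound you were gesturing at with $a\sim\gamma/8$; your intermediate estimate $\sqrt{1+x}\ge 1+x/3$ simply loses too much to close.
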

\begin{proof}
We begin by bounding the final value of $m$. By Observation~\ref{obs:m_lower_bound} 
this final value is $f^{\max} \geq \frac{\gamma}{k} \cdot f(OPT)$. On the other hand, $f(u) \leq f(OPT)$ for every element $u \in \cN$ since $\{u\}$ is a possible candidate to be $OPT$, which implies $f^{\max} \leq f(OPT)$. Thus, the final collection $I$ contains an instance of Algorithm~\ref{alg:streaming_with_tau} for every value of $\tau$ within the set
\begin{align*}	
	& \left\{(1 - \eps)^i \mid i \in \bZ \quad \text{and} \quad (1 - \eps) \cdot f^{\max} / (9k^2) \leq (1 - \eps)^i \leq f^{\max} \cdot k\right\}\\
	\supseteq{} &
	\left\{(1 - \eps)^i \mid i \in \bZ \quad \text{and} \quad (1 - \eps) \cdot f(OPT) / (9k^2) \leq (1 - \eps)^i \leq \gamma \cdot f(OPT) \right\} .
\end{align*}
To see that $\tau^*$ belongs to the last set, we need to verify that it obeys the two inequalities defining this set. On the one hand, 
$a = (\sqrt{2 - e^{-\gamma/2}} - 1)/2  < 1$ implies
\[
	\tau^*
	\leq
	a\gamma \cdot f(OPT)
	\leq
	\gamma \cdot f(OPT)
	\enspace.
\]
On the other hand, $\gamma \geq \nicefrac{4}{3} \cdot k^{-1}$ and $1 - e^{-\gamma/2} \geq \gamma/2 - \gamma^2/8$ 
imply
\begin{align*}
\tau^*
&>
(1 - \eps) \cdot a\gamma \cdot f(OPT)
=
(1 - \eps) \cdot (\sqrt{2 - e^{-\gamma/2}} - 1) \cdot \gamma \cdot f(OPT) / 2\\
&\geq
(1 - \eps) \cdot (\sqrt{1 + \gamma/2 - \gamma^2/8} - 1) \cdot \gamma \cdot f(OPT) / 2\\
&\geq
(1 - \eps) \cdot (\sqrt{1 + \gamma/4 + \gamma^2/64} - 1) \cdot \gamma \cdot f(OPT) / 2\\
&=
(1 - \eps) \cdot (\sqrt{(1 + \gamma/8)^2} - 1) \cdot \gamma \cdot f(OPT)/2
\geq
(1 - \eps) \cdot \gamma^2 \cdot f(OPT) / 16 \\
&\geq
(1 - \eps) \cdot f(OPT) / (9k^2)
\enspace.
\qedhere
\end{align*}
\end{proof}
Combining Lemmata~\ref{lem:good_tau_approximation} and~\ref{lem:good_tau_exists} we get the desired guarantee on the approximation ratio of $\algOurs$. 
\qed

\subsection{Proof of Theorem~\texorpdfstring{\ref{thm:streakSpace}}{\ref{thm:streakSpace}}}
	Observe that $\algOurs$ keeps only one element ($u_m$) in addition to the elements maintained by the instances of Algorithm~\ref{alg:streaming_with_tau} in $I$. Moreover, Algorithm~\ref{alg:streaming_with_tau} keeps at any given time at most $\bigO(k)$ elements since the set $S$ it maintains can never contain more than $k$ elements. Thus, it is enough to show that the collection $I$ contains at every given time at most $\bigO(\eps^{-1}\log k)$ instances of Algorithm~\ref{alg:streaming_with_tau}. If $m = 0$ then this is trivial since $I = \varnothing$. Thus, it is enough to consider the case $m > 0$. Note that in this case
	\begin{align*}
	|I|
	&\leq
	1 - \log_{1 - \eps} \frac{mk}{(1 - \eps)m/(9k^2)}
	=
	2 - \frac{\ln (9k^3)}{\ln(1 - \eps)}\\
	&=
	2 - \frac{\ln 9 + 3 \ln k}{\ln(1 - \eps)} 
	=
	2 - \frac{\bigO(\ln k)}{\ln(1 - \eps)}
	\enspace.
	\end{align*}
	We now need to upper bound $\ln(1 - \eps)$. Recall that $1 - \eps \leq e^{-\eps}$. Thus, $\ln(1 - \eps) \leq -\eps$. Plugging this into the previous inequality gives
	\[
	|I|
	\leq
	2 - \frac{\bigO(\ln k)}{-\eps}
	=
	2 + \bigO(\eps^{-1} \ln k)
	=
	\bigO(\eps^{-1} \ln k)
	\enspace.
	\tag*{\qed}
	\]

\newpage
\subsection{Additional Experiments}

\begin{figure*}[h!]
	\centering
		\subfigure[]{\includegraphics[width=0.85\fwidth,trim={2cm .25cm 2cm .25cm},clip]{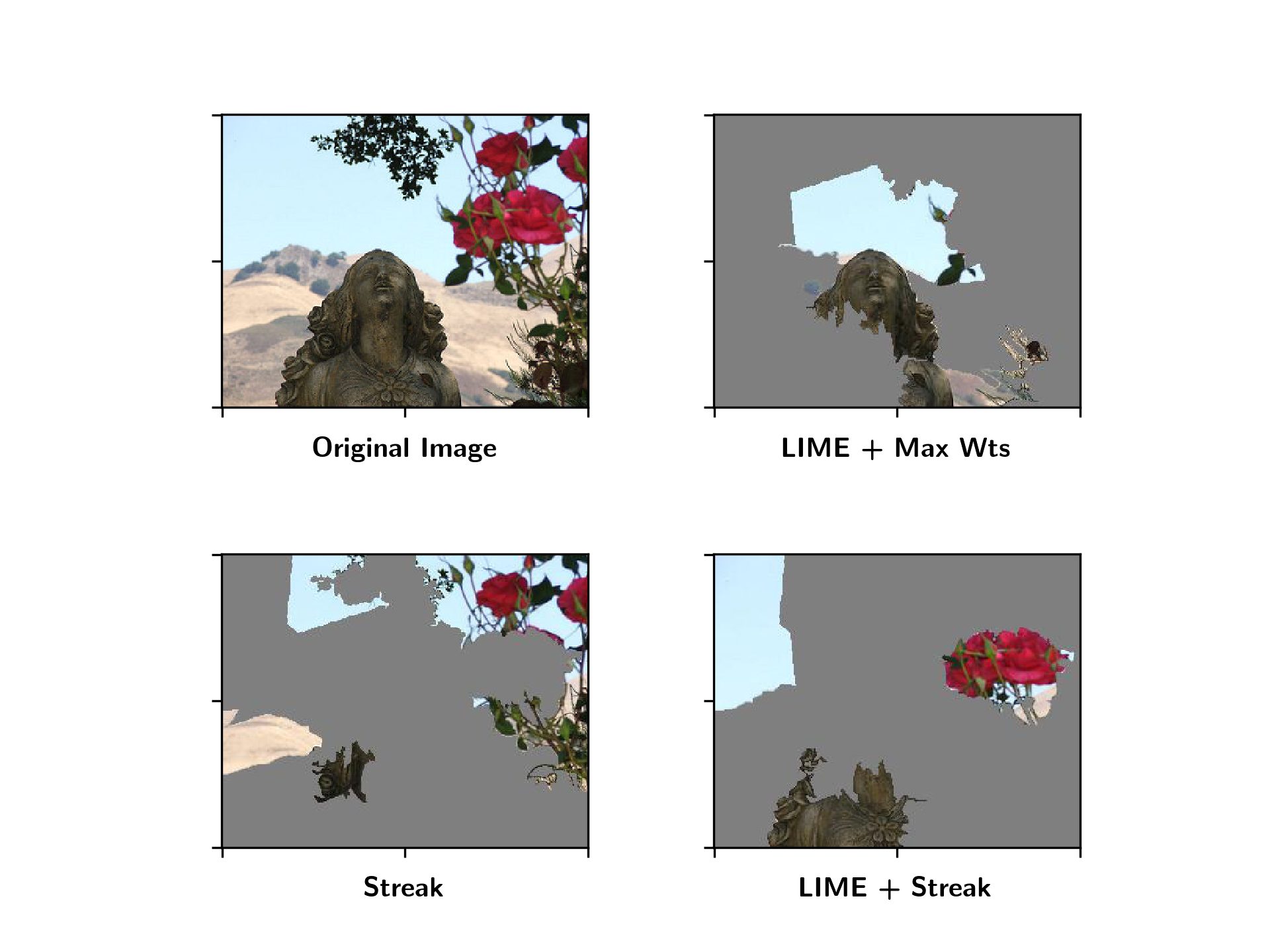}}\qquad
		\subfigure[]{\includegraphics[width=0.85\fwidth,trim={2cm .25cm 2cm .25cm},clip]{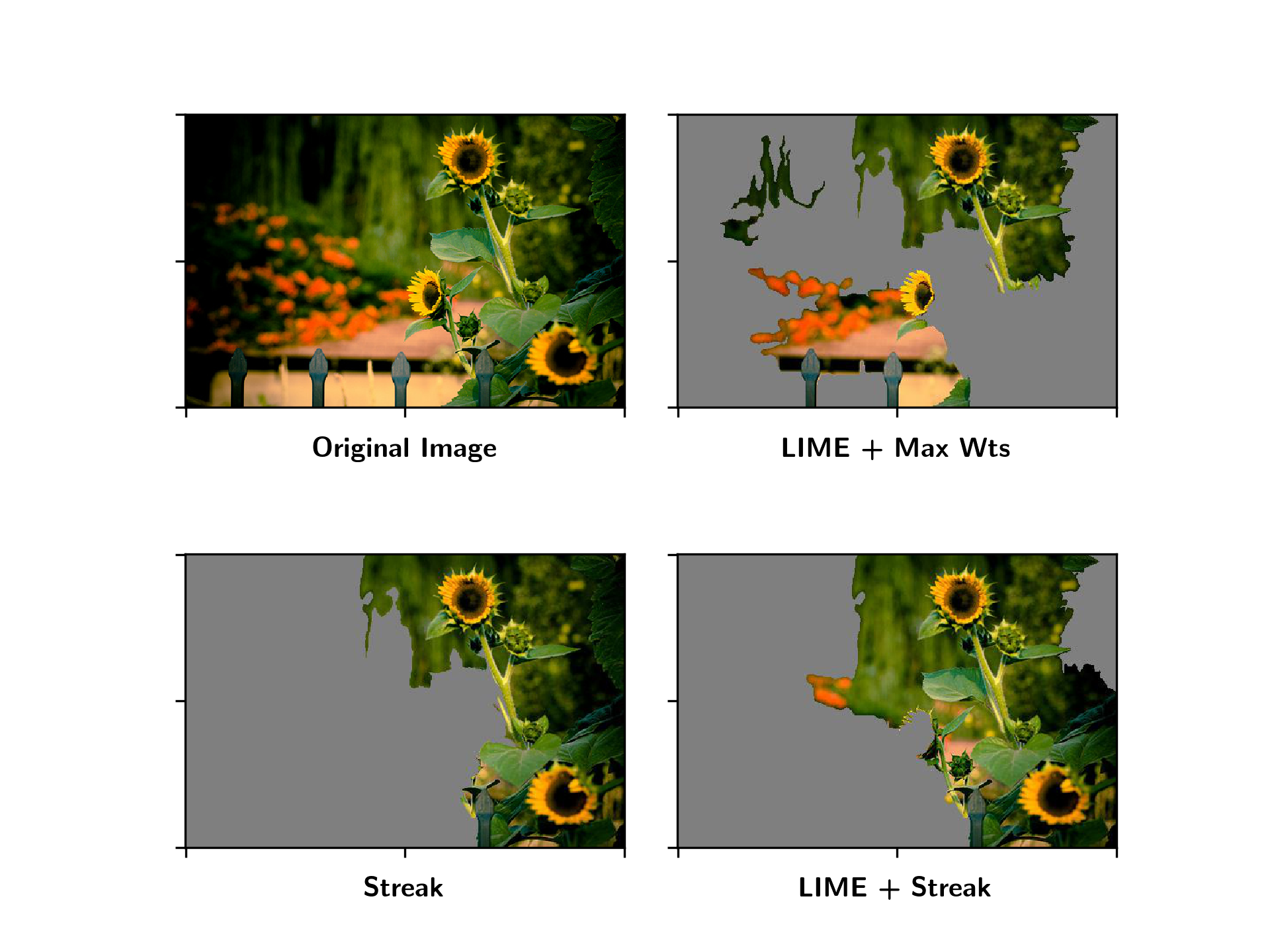}} \\
		\subfigure[]{\includegraphics[width=0.8\fwidth,trim={3cm .25cm 3cm .25cm},clip]{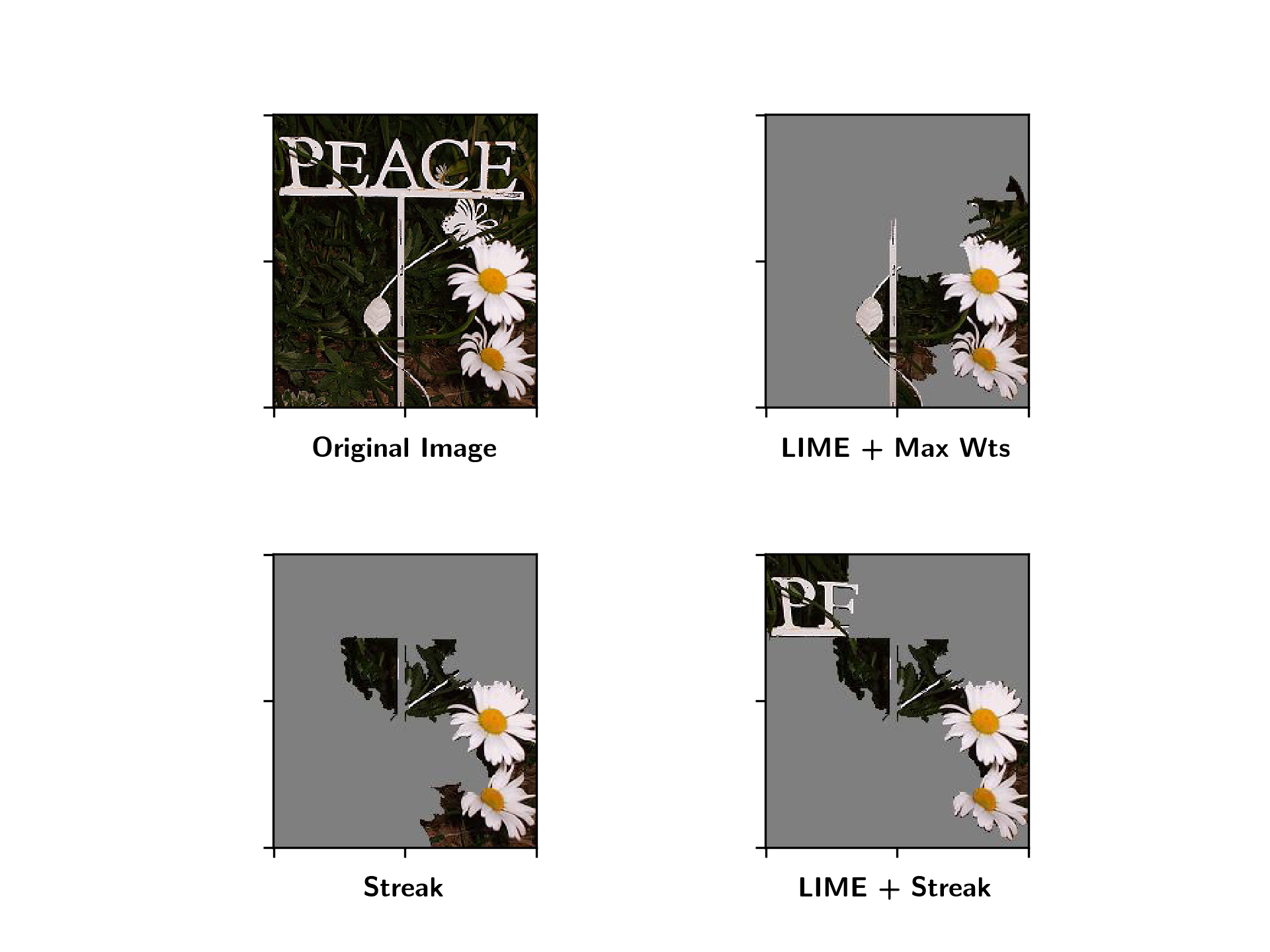}}\qquad
		\subfigure[]{\includegraphics[width=0.9\fwidth,trim={1.5cm .25cm 1.5cm .25cm},clip]{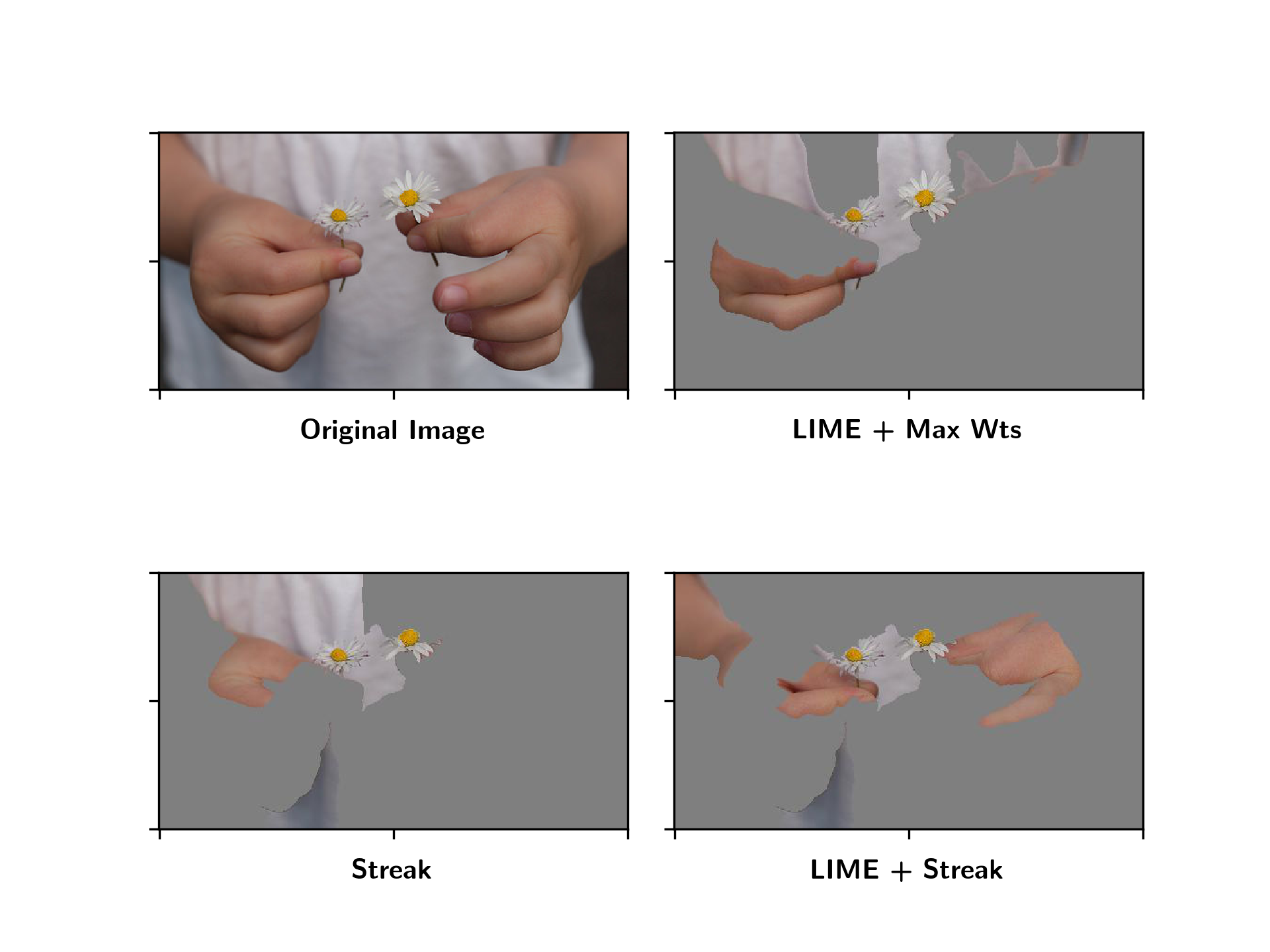}}
	\caption{In addition to the experiment in Section \ref{ssc:lime}, we also replaced LIME's default feature selection algorithms with $\algOurs$ and then fit the same sparse regression on the selected superpixels. This method is captioned ``\textsf{LIME + Streak}.'' Since LIME fits a series of nested regression models, the corresponding set function is guaranteed to be monotone, but is not necessarily submodular. We see that results look qualitatively similar and are in some instances better than the default methods. However, the running time of this approach is similar to the other LIME algorithms.}
	\label{fig:extraExperiment}
\end{figure*}

\begin{figure*}[ht]
\centering
		\subfigure[\label{fig:appFirst}]{\includegraphics[width=0.9\fwidth,trim={3cm .25cm 3cm .25cm},clip]{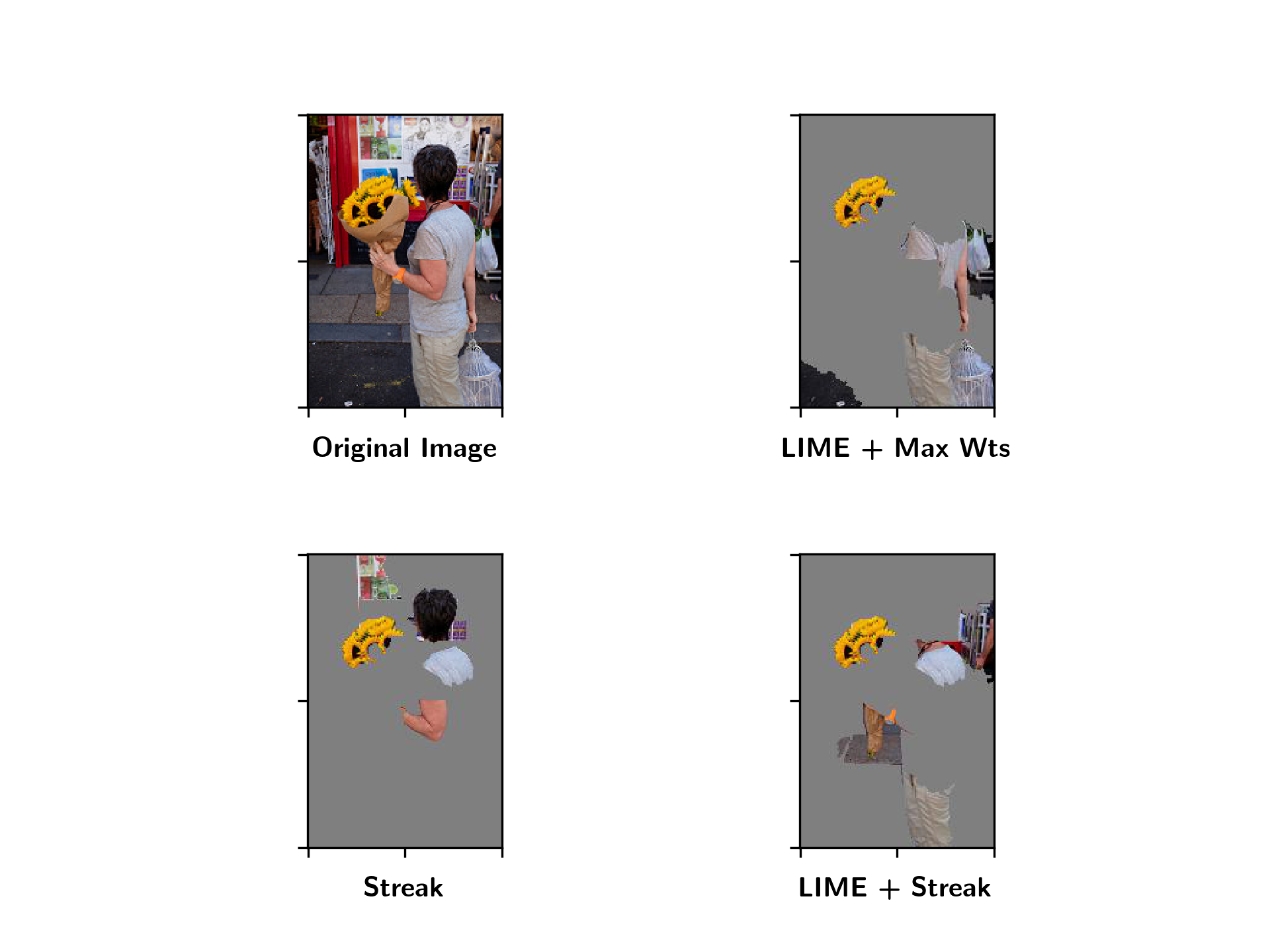}} \qquad
		\subfigure[\label{fig:appSecond}]{\includegraphics[width=0.9\fwidth,trim={3cm .25cm 3cm .25cm},clip]{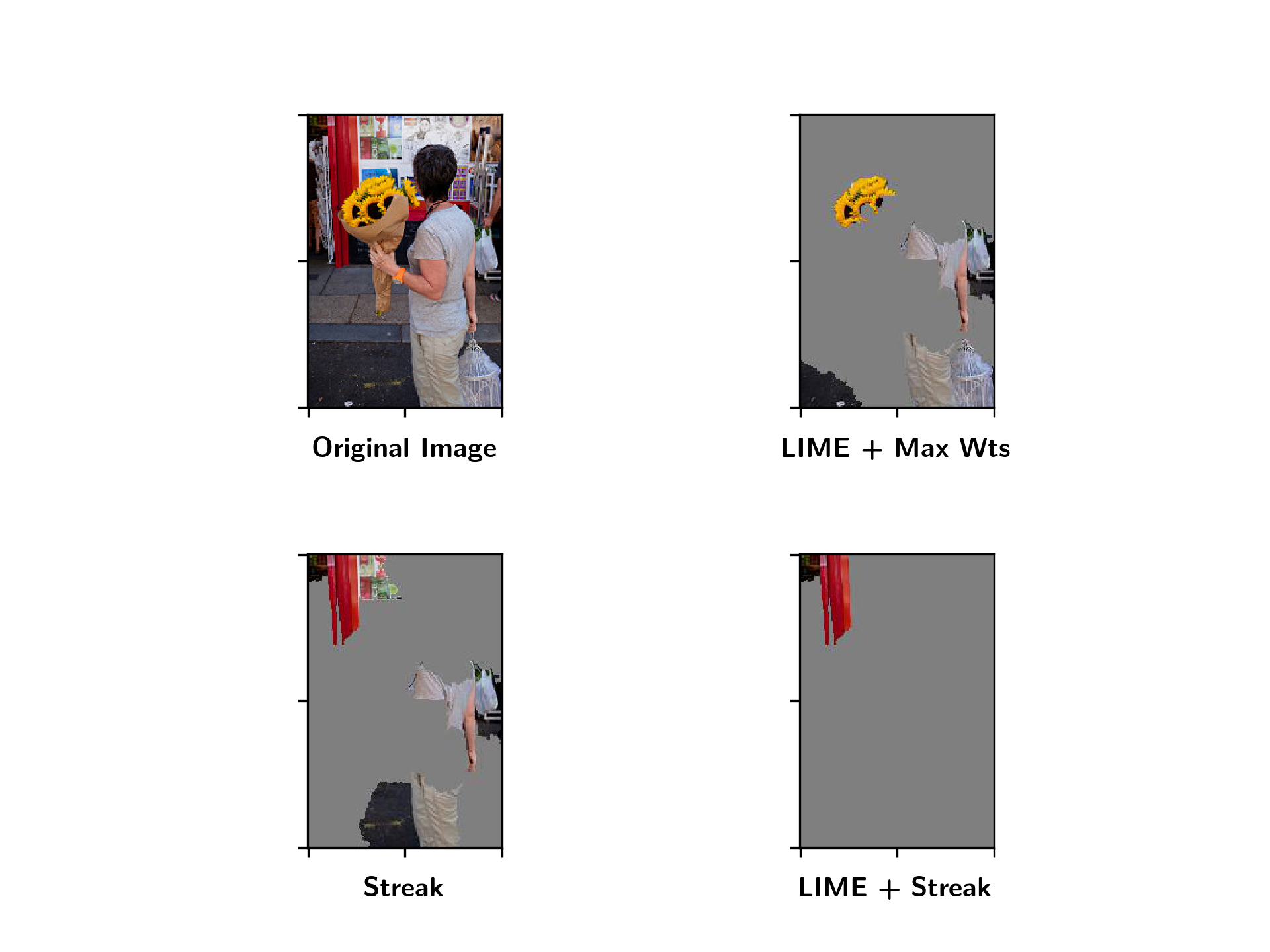}}
		\caption{Here we used the same setup described in Figure~\ref{fig:extraExperiment}, but compared explanations for predicting $2$ different classes for the same base image: \ref{fig:appFirst} the highest likelihood label (sunflower) and \ref{fig:appSecond} the second-highest likelihood label (rose). All algorithms perform similarly for the sunflower label, but our algorithms identify the most rose-like parts of the image.}
\end{figure*}

\end{document}